\definecolor{cvprblue}{rgb}{0.21,0.49,0.74}
\newcommand*\ouralgo{{\sc MOLERE}}
\newcommand*\ouralgostr{{\bf M}eta-{\bf O}ptimized {\bf LE}arned {\bf RE}weighting}
\newtheorem*{rep@theorem}{\rep@title}
\newcommand{\newreptheorem}[2]{%
\newenvironment{rep#1}[1]{%
 \def\rep@title{#2 \ref{##1}}%
 \begin{rep@theorem}}%
 {\end{rep@theorem}}}
\newtheorem{theorem}{Theorem}
\title{Improving Generalization via Meta-Learning on Hard Samples}
\author{Nishant Jain~~~~~~~Arun S. Suggala~~~~~~~Pradeep Shenoy\\
 Google Research India\\
{\tt\small \{nishantjn,arunss,shenoypradeep\}@google.com}\\
}
\begin{document}
\maketitle

\begin{abstract}

\textit{Learned reweighting} (LRW) approaches to supervised learning use an optimization criterion to assign weights for training instances, in order to maximize performance on a representative validation dataset. We pose and formalize the problem of \textit{optimized selection} of the validation set used in LRW training, to improve classifier generalization. In particular, we show that using hard-to-classify instances in the validation set has both a theoretical connection to, and strong empirical evidence of generalization. We provide an efficient algorithm for training this \textit{meta-optimized model}, as well as a simple train-twice heuristic for careful comparative study. We demonstrate that LRW with easy validation data performs consistently worse than LRW with hard validation data, establishing the validity of our meta-optimization problem. Our proposed algorithm outperforms a wide range of baselines on a range of datasets and domain shift challenges (Imagenet-1K, CIFAR-100,  Clothing-1M, CAMELYON, WILDS, etc.), with ~1\% gains using VIT-B on Imagenet. We also show that using naturally hard examples for validation (Imagenet-R / Imagenet-A) in LRW training for Imagenet improves performance on both clean and naturally hard test instances by 1-2\%. Secondary analyses show that using hard validation data in an LRW framework improves margins on test data, hinting at the mechanism underlying our empirical gains. We believe this work opens up new research directions for the meta-optimization of meta-learning in a supervised learning context.

\end{abstract}

\section{Introduction}
Overparameterized models, common in supervised learning~\cite{neyshabur2018role}, carry the risk of overfitting to training data. Typically, model generalization is measured on a validation dataset separate from the training data, for purposes of  hyperparameter selection. Increasingly, this validation dataset is itself used as part of the learning objective in nested formulations, e.g., for hyperparameter tuning via gradient descent on the validation loss~\cite{franceschi2018bilevel}. In particular, learned reweighting (LRW) approaches \textit{learn} importance weights associated with training instances~\cite{shu2019meta,zhang2021learning,ren2018learning, holtz2022learning, jain2024instance, jain2024learning} or groups of training instances~\cite{mohri2019agnostic, zhou2022model} by optimizing a weighted training loss alongside an unweighted \textit{meta-loss} on the validation data. This bilevel optimization aligns training loss with the validation data distribution via reweighting, a useful property in addressing covariate shift~\cite{sugiyama2008direct}, and for group DRO~\cite{zhou2022model, faw2020mix}.  Even for in-domain test data, \citet{ren2018learning} show that using ``clean’’ validation data with bilevel optimization can overcome significant amounts of label noise in training data. 

Thus the \textit{choice of validation set} in a LRW paradigm can greatly influence the quality and properties of the learned classifier. We therefore ask the question: can we \textit{optimize} the choice of validation data in LRW so as to maximize generalization of the resulting classifier? We refer to this problem of validation set selection as \textit{meta-optimization}, since it produces data that is the input of another optimization (the meta-learning approach underlying LRW). The closest related work to our proposal is \cite{zhang2021learning} which constructs a validation dataset on the fly during training of an LRW classifier, with a criterion of choosing representative instances. 

Our primary hypothesis is that we can improve classifier generalization by using a validation set consisting of hard-to-classify instances from the training distribution. We therefore pose and formalize the problem of \ouralgostr\ (\ouralgo) where the partitioning of data into train and validation splits, and the LRW classifier corresponding to that split, are jointly optimized. 
We design an efficient algorithm to tackle this meta-optimization, and make the following contributions:

\begin{itemize}
\item We formalize the  problem statement of \textit{validation set optimization} in learned reweighting (LRW) classifiers for improved generalization. We prove that asymptotically our optimization objective exactly achieves our stated goal of maximizing accuracy on the hardest samples. 
\item  We simplify the nested optimization of our proposal into a tractable bi-level optimization with a min-max game between two auxiliary networks: a ``splitter'' that finds the hardest samples, and a ``reweighter'' that minimizes loss on those samples using LRW. We also provide a simple train-twice heuristic that can be used for careful analysis of the choice of validation data in LRW.
\item We show strict accuracy ordering of LRW models based on validation set: easy $<$ random $<$ hard,  demonstrating the importance of  optimizing LRW validation sets. We obtain reliable gains over ERM across datasets (e.g., 1\% on Imagenet/VIT-B backbone), and in domain-generalization (e.g., 1.36\% on iWildCam dataset \cite{beery2020iwildcam,koh2021wilds}) and noisy-label settings (e.g., 4.2\% on Clothing1M dataset \cite{xiao2015learning}). We outperform a range of baselines including reweighting, meta-learning among others. Analyses show improved margins on test set in \ouralgo\ classifiers, suggesting an explanation of our gains.
 \item We extend our results to natural hard samples as validation (Imagenet with Imagenet-R / Imagenet-A), showing 1-3\% gains on \textit{both in-domain and out-of-domain} test sets. This shows the value of our ideas even in scenarios where we pay no additional cost for meta-optimization.
\end{itemize}

\noindent
We hope that our work will be seen as an initial step in establishing the value of meta-optimization of meta-learning, with our findings providing a strong proof-of-concept for the general research direction. 

\section{Related Work}

\textbf{Importance weighting for robustness.} There have been several works on learning robust representations via example re-weighting lately. Mostly, these work aim at avoiding noisy labels in the train set by analyzing the effect of decreasing loss on a given instance on a clean validation set. This line of work includes either learning a per-instance free parameter \cite{ren2018learning} or learning a simple MLP network \cite{shu2019meta} to predict importance of an instance based on its loss value. However, the requirement of a clean validation set limits their applicability to realistic scenarios. To deal with this, a meta-learning based re-weighting scheme (Fast Sample Reweighting~\cite{zhang2021learning}) was proposed based on generating some sort of pseudo-clean data as a proxy validation set. It also proposed certain approximations to make the training process more computation efficient. On a different note, a recent proposal called RHO-loss~\cite{mindermann2022prioritized} proposed to select only \textit{worthy} points for training which increased generalization property of the model, calculated as the difference of the training loss and a hold-out set loss.

A recent line of work \cite{jain2024instance, jain2024learning} proposed weighting based on context/relevance of an instance compared with the overall data distribution. These papers target slow temporal drift in longitudinal datasets, and the development of better uncertainty measures for selective classification, respectively. In particular, \citet{jain2024learning}, suggest that using target domain data in the validation set can improve domain shift performance of classifiers.

\textbf{Meta Learning.} The sample re-weighting task using a validation set comes under the umbrella of meta-learning \cite{hospedales2020meta}, which follows the \textit{learning-to-learn} paradigm. It is conceptually similar to model agnostic meta learning (MAML) \cite{finn2017model}, which learns a single set of parameters that can easily be customized (few-shot) to multiple tasks. Learning these shared parameters involves a nested optimization similar to the one presented here, and significantly optimized for efficiency by recent work\cite{raghu2019rapid,nichol2018first,zintgraf2018caml}.

\textbf{Probabilistic Margins.}  Recent work \cite{liu2021probabilistic} showed that the probabilistic margin in multi-class problems can be used to improve how neural networks deal with  adversarial examples. The margin is defined as the difference between the probability of the true label and the largest of the remaining label probabilities,  indicating the difficulty of classifying that instance. \citet{liu2021probabilistic}  propose re-weighting training instances, in presence of an adversarial attack, inversely related to their probability margins. 

\textbf{Just Train Twice}. A recent work \cite{liu2021just} proposed an effective strategy to improve sensitivity of ERM models towards certain groups by training the ERM model in 2 stages. The first stage is a standard training procedure, whereas the second stage involves giving more importance to the incorrectly classified examples in the first stage by up-weighting their loss in the aggregate loss term for updating the parameters. This can be interpreted as similar to giving more importance to low margin examples (here essentially a hard separation between positive and negative). \\





\section{Preliminaries: Learned ReWeighting}

We work with learned reweighting (LRW) classifiers where training data is reweighted in order to optimize some specified metric on validation data.  
%
%
%
The basic LRW formulation~\cite{ren2018learning} works with two datasets $S_{tr} = \{(x_i, y_i)\}_{i=1}^N$ and $S_{val}=\{(x_i,y_i)\}_{i=1}^M$  (training \& validation, respectively).  Given a desired loss function $\ell(y, \hat{y})$, LRW learns a classifier $f_\theta(\cdot),$ (with parameters $\theta$) and an instance-wise weighting function $\phi(\cdot)$ that minimize following bi-level objective:
\begin{equation}
\begin{aligned}
\label{eq:bilevel-meta}
    \theta^*(\phi) &= \arg \min_\theta \sum_{(x,y) \in S_{tr}} \phi(x) \ell(y, f_\theta(x)), \\
    \quad \texttt{s.t. } \phi^* &= \arg \min_\phi \sum_{(x,y)\in S_{val}} \ell(y, f_{\theta^*(\phi)}(x))
\end{aligned}
\end{equation}



Notice that the validation loss is unweighted, while the training objective is a weighted loss. Essentially, the above bilevel objective computes an \textit{ estimate} of classifier performance on the validation set, and optimizes it indirectly  via reweighting of training data to influence the learned classifier. The weighted-loss-minimizing classifier $f_{\theta^*}(\cdot)$, and the weights $\phi^*(\cdot)$, are learned jointly, typically using alternating stochastic updates over minibatches to make the learning tractable~\cite{ren2018learning}; this is in line with other meta-learning approaches such as MAML~\cite{finn2017model}.
%
Following more recent work~\cite{jain2024instance}, we use an instance-dependent neural network $\phi(x)$ for learning the training instance weights, instead of free parameters~\cite{ren2018learning}.

\noindent \textbf{Intuition:} LRW has been used for overcoming training label noise~\cite{ren2018learning,shu2019meta} and for handling covariate shift~\cite{zhou2022model,jain2024instance}. In these cases, the validation set is assumed to be representative of test samples (i.e., clean labels, or covariate-shifted data, respectively). The intuition is that the learned reweighting of the training loss \textit{aligns} it with the (possibly different) validation distribution in the process of indirectly minimizing the meta-loss (Eq.~\ref{eq:bilevel-meta}). In particular, \citet{shu2019meta} show analytically that ``validation-like'' instances are upweighted. Thus, the validation set in LRW can be thought of as a \textit{target for generalization}, with the learned classifier being optimized for performance on that target set. 




\section{\ouralgo: Optimizing LRW models}

\subsection{Hypothesis and formal objective}\label{sec:hyp}

Our primary hypothesis is that we can improve generalization capabilities of supervised learning by combining two ingredients: a) a learned-reweighting classifier, as described in the previous section, and b) an \textit{ optimized validation set} that strongly encourages desired properties in the reweighting classifier. We refer to this idea as \textbf{M}eta-\textbf{O}ptimization of the \textbf{Le}arned \textbf{Re}weighting framework. 

In particular, we propose learning an LRW classifier with hard samples as validation set, to improve accuracy and generalization of learned classifiers. Since LRW by definition maximizes classifier performance on a given validation set, we believe that this choice of validation set will maximize generalization. Thus, given a dataset for training a predictive model, we need to 1) select the ``hard instances'' from the dataset and separate it into a validation set, and 2) train a classifier on the remaining data using this validation set for LRW.
Notice that hardness of instances is determined in terms of the learned model itself; thus, formalizing the above idea leads to a joint optimization problem of data partitioning (train, validation), and LRW training. We present the formal problem below.

\noindent \textbf{Objective:} Let $S=\{(x,y)\}_{i=1}^{N+M}$ be the available data, and let $\Theta$ be the splitting function that splits $S$ into training and validation datasets; to be precise, we let $\Theta(S)$ be the validation set and $\Theta(S)^c$ be its complement. MOLERE aims to solve the following tri-level optimization problem
\begin{equation}
\begin{aligned}
\label{eqn:tri_level}
&\Theta^* = \arg \max_\Theta \sum_{(x,y)\in \Theta(S)} \ell(y, f_{\theta^*(\phi^*(\Theta), \Theta)}(x)) \\
   &\texttt{where} \ \phi^*(\Theta) = \arg \min_\phi \sum_{(x,y)\in \Theta(S)} \ell(y, f_{\theta^*(\phi, \Theta)}(x))\\
  &\texttt{s.t.}   \theta^*(\phi, \Theta) = \arg \min_\theta \sum_{(x,y) \in \Theta(S)^c} \phi(x) \ell(y, f_\theta(x)).
\end{aligned}
\end{equation}

    
In words: Find a data split, such that across all possible splits the LRW classifier learned on the split has maximal error on the chosen validation set. In practice, we impose an additional constraint on the validation set size: $|\Theta(S)|/(N+M)\leq\delta$, where $\delta$ is some predefined fractional constant. 

\subsection{\ouralgo\ objective and generalization}\label{sec:thm}
 We now study the asymptotic properties of our proposed meta-optimization as $N+M \to \infty$, as a means of gaining theoretical insights into its generalization capabilities in comparison to classical empirical risk minimization (ERM).
 This analysis assumes a weighting function $\phi(\cdot)$ dependent on both $x$ and $y$. Interestingly, our experiments revealed similar performance between this formulation of $\phi$ and one relying solely on $x$.
 The following proposition shows that asymptotically \ouralgo\ solves a \emph{robust optimization} objective. 
\begin{theorem}[Asymptotics]
\label{asymptotics}
Consider the tri-level optimization in Equation~\eqref{eqn:tri_level}. Suppose the weighting function $\phi(\cdot),$ and splitting function $\Theta(\cdot)$ are dependent on both $x$ and $y$. Let's suppose $N+M\to\infty$, and $\lim_{N,M\to\infty}\frac{M}{N+M} = \delta$. Moreover, suppose the domains of $\phi, \theta, \Theta$ are very large and contain the set of all measurable functions. Then the objective of \ouralgo\  is equivalent to
\begin{equation}
\label{eqn:dual_dro}
    \max_{S: |S|= \delta (N+M) } \min_{\theta} \sum_{(x,y)\in S} \ell(y, f_\theta(x)).
\end{equation}
\end{theorem}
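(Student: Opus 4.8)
The plan is to unwind the tri-level optimization in Equation~\eqref{eqn:tri_level} from the inside out, taking the $N+M\to\infty$ limit at each level, and to exploit the assumption that $\phi$, $\theta$, and $\Theta$ range over \emph{all} measurable functions. First I would look at the innermost problem: for a fixed split $\Theta$ and fixed weights $\phi$, the learner minimizes the weighted empirical risk $\sum_{(x,y)\in\Theta(S)^c}\phi(x,y)\,\ell(y,f_\theta(x))$ over $\theta$. Because the domain of $\theta$ contains all measurable functions, $f_\theta$ can be chosen pointwise, so the minimizer drives the loss to its pointwise infimum on the support of $\phi$ restricted to the training portion; the role of $\phi$ here is merely to select \emph{which} training points the classifier is forced to fit. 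Then I would turn to the middle level: $\phi^*$ is chosen to \emph{minimize} the (unweighted) loss on the validation set $\Theta(S)$. Since $\phi$ can be any measurable function, the reweighter has full freedom to decide which training points the classifier fits, and hence — through the innermost level — full control over $f_{\theta^*(\phi,\Theta)}$ on the training support. I would argue that the best $\phi$ makes the classifier interpolate exactly the validation set's needs, i.e.\ the middle+inner levels together compute $\min_\theta \sum_{(x,y)\in\Theta(S)}\ell(y,f_\theta(x))$ subject to the classifier also being consistent on the training data — and asymptotically, when the two splits have overlapping support (same underlying distribution), this collapses to plain $\min_\theta\sum_{(x,y)\in\Theta(S)}\ell(y,f_\theta(x))$.

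Next I would handle the outermost level. The splitter $\Theta$ \emph{maximizes} the validation loss of the resulting LRW classifier. Having established that the inner two levels reduce to $\min_\theta\sum_{(x,y)\in\Theta(S)}\ell(y,f_\theta(x))$, the outer problem becomes $\max_\Theta \min_\theta \sum_{(x,y)\in\Theta(S)}\ell(y,f_\theta(x))$, and identifying $S' := \Theta(S)$ with an arbitrary subset of the data of the prescribed size gives exactly Equation~\eqref{eqn:dual_dro}. The size constraint $|\Theta(S)|/(N+M)\le\delta$ together with $M/(N+M)\to\delta$ is what pins the cardinality of the maximizing set to $\delta(N+M)$. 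The asymptotic hypothesis is used to ensure that the empirical measures on the train and validation halves converge to the same population distribution, so that forcing consistency on the training portion imposes no extra constraint beyond what is already implied on the validation portion (any measurable $f$ fitting the validation support can be extended consistently).

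The main obstacle, and the step I would spend the most care on, is the middle level: making rigorous the claim that letting $\phi$ range over all measurable functions gives the reweighter enough power to \emph{exactly} realize the unconstrained minimizer $\min_\theta\sum_{\Theta(S)}\ell$ via its indirect control of $\theta^*(\phi,\Theta)$. One has to rule out degeneracies — e.g.\ $\phi\equiv 0$, or weightings that make the inner argmin ill-defined or non-unique — and one has to handle the fact that $\phi$ influences $\theta$ only through the training split, whereas the loss being minimized is on the validation split. The clean way is to argue that, in the limit, for any target classifier $g$ achieving the validation-side minimum there is a measurable $\phi$ (supported where $g$ is the pointwise loss-minimizer) such that $\theta^*(\phi,\Theta)$ coincides with $g$ on the relevant support; this is where the ``domains are very large'' and the $x,y$-dependence of $\phi$ assumptions do the real work. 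I would also note in passing why $\phi$ depending on $y$ (not just $x$) matters: it lets the reweighter distinguish points that share an $x$ but differ in label, which is needed for the pointwise argument to go through, and it is consistent with the theorem's stated hypothesis.

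A secondary, more routine obstacle is the interchange of limits: taking $N+M\to\infty$ commutes with the three nested $\arg\min/\arg\max$ operations. I would treat this at the level of $\Gamma$-convergence / epi-convergence of the empirical objectives to their population counterparts, or simply invoke a uniform law of large numbers over the (large but, in the limiting idealization, effectively unrestricted) function classes, and not belabor the measure-theoretic technicalities — the paper's framing (``domains are very large and contain the set of all measurable functions'') signals that this is meant as an idealized asymptotic statement rather than a finite-sample generalization bound.
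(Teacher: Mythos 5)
Your proposal is correct and follows essentially the same inside-out reduction as the paper: collapse the inner two levels to unweighted loss minimization on the validation split, then read off the max--min over splits, with the size constraint pinned by $\delta$. The one device you circle around without naming is the paper's key observation that, because $\phi$ depends on $(x,y)$, one can take $\phi = Q^{\mathrm{val}}/Q^{\mathrm{tr}}$ (an importance weight transforming the training distribution into the validation distribution), which makes the weighted training objective literally equal to the validation objective and thereby disposes in one line of the middle-level realizability/non-uniqueness worry you correctly flag as the delicate step (the disjoint-support case being handled separately via a Bayes-optimal $\theta$).
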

\begin{proof}[Proof Sketch.] The proof of the theorem relies on the observation that there exists a weighting function $\phi$ that can transform any probability distribution $P$ to any another distribution $Q$ (as long as the support of $Q$ is a subset of $P$). Using this observation, one can show that the second, third level optimization problems in Equation~\eqref{eqn:tri_level} is equivalent to following problem: $\min_\theta \sum_{(x,y) \in \Theta(S)^c} \ell(y, f_\theta(x))$.
\end{proof}
A similar result holds when both $\Theta(\cdot)$, $\phi(\cdot)$ rely solely on $x$.  Intuitively, the above theorem shows that in the limit of infinite samples, \ouralgo\ identifies the hardest samples in the training data, and learns a classifier that \textit{minimizes the error on those samples}. This is exactly the goal laid out in our hypothesis above. \vspace{0.02in}

\noindent \textbf{Connections to DRO.} Interestingly, the objective in Equation~\eqref{eqn:dual_dro} is the dual of the following Distributionally Robust Optimization (DRO) objective~\citep{ben2013robust}
\begin{equation}
\label{eqn:dro}
     \min_{\theta} \max_{S: |S|= \delta (N+M) } \sum_{(x,y)\in S} \ell(y, f_\theta(x)).
\end{equation}
DRO is a well studied framework for learning robust models~\citep{namkoong2017variance, duchi2018learning}. However, to the best of our knowledge, dual DRO is not studied in the literature; we will explore this connection more deeply in future work.  

\subsection{Efficient algorithm for validation optimization}
\label{sec:e2e}

Designing a tractable algorithm for meta-optimization runs into two technical challenges: a) How can we learn to assign instances to train and validation sets? and b) How do we efficiently solve the tri-level objective proposed in \cref{sec:hyp}?  We address these two challenges in this section. First, we use a second auxiliary network for soft-assigning instances to train and validation datasets.  Second, we collapse the outer two loops of the trilevel objective into a minimax formulation, and thereby reduce it to a bi-level optimization problem.  We describe each of these in order below.

\noindent \textbf{Soft data assignment:} Inspired by a recent work~\cite{bao2022learning}, we use a partitioning network to predict probability for each instance to be included in the validation set. 
At any point in time, the ``splitter'' network outputs soft assignments $\mathbb{P}(z|x,y)$ with $z\in\{0,1\}$ indicating whether the example $(x,y)$ belongs to the \textit{pseudotest} set ($z=0$) or the \textit{pseudotrain} set. The pseudotrain set is used to train a classifier using standard cross-entropy. The splitter is then updated to identify easy instances for the classifier and assign them to the pseudotrain partition in the next round; this is achieved by minimizing the cross-entropy between its soft-assignment and classifier accuracy:
\begin{equation}
\begin{aligned}
    &\mathcal{L}_{split} = CE(\mathbb{P}_{splitter}(z_i|x_i,y_i), \mathbb{I}_{y_i}(\hat{y}))  \\ 
    & \texttt{where} \quad \hat{y} = \arg \max \mathbb{P}_{predictor}(y|x_i) 
    \end{aligned}
\end{equation}
In order to maintain the label distribution and train-to-test ratio, two regularizers \cite{bao2022learning} are added to penalize shift from the prior distribution, and to push label margins in the training split and testing split close to the original label margin (see supplementary for details).



\noindent \textbf{Meta-optimization with min-max objective:} To simplify the tri-level optimization in \cref{sec:hyp}, 
we  propose a bi-level approximation where the outer loop combines the data splitting and instance reweighting objectives.  Specifically, we propose a min-max game between the splitter (parametrized by $\Theta$) and the meta-network (parametrized by $\phi$), where the splitter has to maximize validation set error whereas instance weights are focused on minimizing it:
\begin{equation}
    \begin{aligned}
        \Theta^*, \phi^* = &\arg \max_\Theta \min_\phi \sum_{(x,y)\in S_t^c}   \left(\ell(y, f_{\theta^*(\phi,\Theta)}(x)) - \mathcal{L}_{split} \right)\\
       where& \quad   \theta^*(\phi,\Theta) = \arg \min_\theta  \sum_{(x,y)\in S_t} \phi(x) \ell(y, f_\theta(x)) \label{eq:bilevel-train}\\
    \end{aligned}
\end{equation}
where the set $S_t = \{(x,y):(x,y)\in S, \mathbb{I}\{\Theta(x,y)>0.5 \}\}$ and $S_t^c = S\setminus{S_t}$, $\mathbb{I}$ denotes the indicator function.
The overall algorithm updates the splitter and instance weighting network once every $Q$ steps of classifier update, and also regularizes the splitter with $\Omega_{ratio}+\Omega_{ld}$ at set intervals (every R steps). The complete description of this method is provided in Algorithm 1 and    analysis of the loss function at the outer level, derivation of update equations for all parameters ($\Theta, \phi, \theta$) are provided in the supplementary. Both the instance weight network and Splitter are parameterized as neural networks. The meta-network for instance weights predict the weight for a training instance ($x_i$, $y_i$) by using ($x_i$) as input, $w_i=g_\phi(x_i)$ and the Splitter predicts a probability of ($x_i$,$y_i$) being in the train set by taking both of them as inputs $z_i=g_\Theta(x_i,y_i)$.
\noindent
In experiments, we refer to this end-to-end optimization method as \textbf{LRWOpt}.

\begin{algorithm}[H]
\caption{LRWOpt: The Overall One-Shot Algorithm.}\label{one_shot_alg}
\begin{algorithmic}[1]
\Require  $\theta$,  $\Theta$,  $\phi$, learning rates ($\beta_1$, $\beta_2$, $\beta_3$), $\mathcal{S}$, $N$, $M$. 
\Ensure Robustly trained classifier parameters $\theta$.
\State Randomly initialize $\theta$, $\Theta$ and $\phi$;
\State initialize ge = 0;
\Comment{Difference b/w train and val error}
\For{e=1 \textbf{to} MaxEpochs}
\State $\mathcal{S}_{tr}, \mathcal{S}_{val} = \text{GenerateSplit}(\mathcal{D},\Theta)$
\For{$b=1$ \textbf{to} M//m} \Comment{m is the batch size}
\State $\{(x^v_i, y^v_i)\}_{i=1}^m$ = SampleMiniBatch($\mathcal{S}_{val},m$);
\State $\Theta \gets \Theta - \beta_1\nabla_{\Theta}\sum  \left (\mathcal{L}_{split} - \ell(y^v_i, f_\theta(x^v_i))  \right)$
\State $\phi \gets \phi - \beta_2\nabla_{\phi}\sum \left (\ell(y^v_i, f_{\theta}(x^v_i)) -\mathcal{L}_{split} \right)$
\For{$j=1$ \textbf{to} $Q$}
\State $\{(x_i, y_i)\}_{i=1}^n$ SampleMiniBatch($\mathcal{D}_{t},n$);
\State $\theta \gets \theta - \beta_3\nabla_{\theta}\sum g_\phi(x_i)\ell(f_\theta(x_i),y_i)$;
\EndFor
\EndFor
\If{$\sum \ell{(y^v_i,f_\theta(x^v_i))}- \sum \ell(y_i, f_\theta(x_i))<$ge}
\State \textbf{break};
\EndIf
\State ge = $\sum \frac{1}{M}\ell(y^v_i,f_\theta(x^v_i))- \frac{1}{N}\sum \ell(y_i, f_\theta(x_i))$
\EndFor
\end{algorithmic}
\end{algorithm}

\begin{figure*}[!htb]
\vskip 0.2in
\begin{center}
\centerline{
    \includegraphics[width=.49\linewidth]{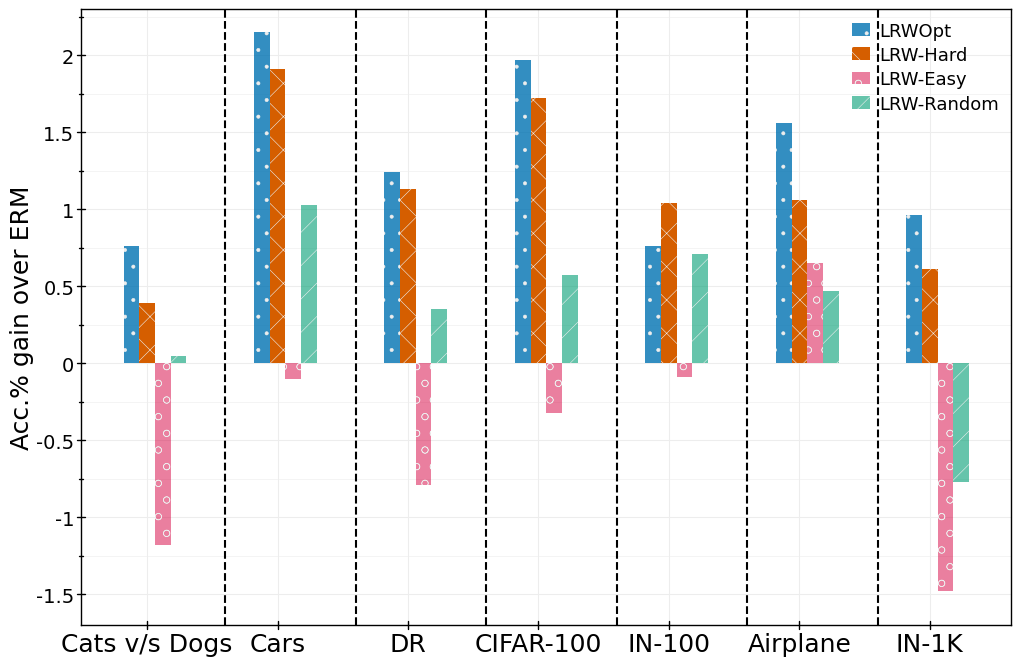}
    \includegraphics[width=.49\linewidth]{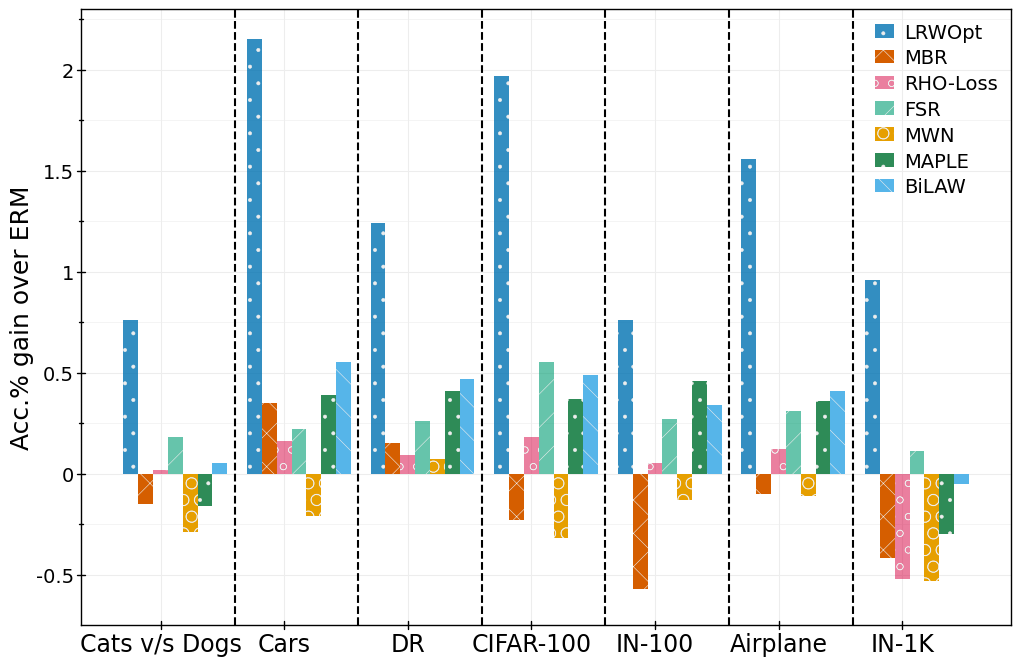}
}
\caption{ \textbf{Robustness analysis on benchmark datasets.}
\textbf{Left:} Comparing different LRW variants, based on the choice of validation set (Easy, Random, Hard, corresponding to the rank-ordering of training data by probabilistic margin of an ERM classifier). $y$-axis shows accuracy gains over ERM for each dataset ($x$-axis). We see consistent ordering of performance, with LRW-Easy $<$ LRW-Random $<$ LRW-Hard, showing the importance of validation set optimization.
\textbf{Right:} Comparing against other re-weighting methods. The figure shows that our proposal (LRW-Hard) outperforms the other reweighting techniques on average, with fast sample re-weighting (FSR) begin competitive in some datasets. In-1K corresponds to ImageNet-1K.
For absolute accuracy values refer supplementary.
}
\label{fig:acc_gain}
\end{center}
\vskip -0.2in
\end{figure*}

\subsection{A simple train-twice heuristic}\label{sec:hard}
We now describe a simple \textit{train-twice} heuristic that can be used to establish the importance of validation set optimization in LRW. We first train an ERM classifier on the available training data, and use the \textit{probabilistic margin} (PM) as a proxy for instance hardness:
\begin{equation}
    PM(x,y,\theta) = \textbf{p}_y(x,\theta) - \max_{j,j\neq y}\textbf{p}_j(x,\theta)
\end{equation}
Here $\theta$ denotes the ERM classifier's parameters. This criterion was used to manually reweight adversarial examples in recent work~\cite{liu2021probabilistic}. Although this proxy score is inexact\footnote{For instance, it measures instance hardness under an ERM classifier, not the to-be-trained LRW model; further, an overtrained classifier may give incorrectly overconfident margins~\cite{liu2021just}.}, it nevertheless allows us to design interesting heuristic LRW variants based on its rank-ordering of training instances: (1) \textbf{LRW-Hard}, where we use the lowest margin instances as validation data, and the rest of the instances as training data, (2) \textbf{LRW-Easy}, in which the \textit{highest margin} instances are used as validation data, and (3) \textbf{LRW-Random}, a control which uses a randomly selected validation set that does not depend on the ERM margin. 

\noindent
We can use these variants to quantify the impact of perturbing the validation set in LRW, and to provide an \textit{existence proof} of validation optimization techniques that materially improve learned classifiers. In particular, we expect to see a clear ordering of classifier test accuracy -- LRW-Easy $<$ LRW-Random $<$ LRW-Hard.

\section{Experiments}

We  perform extensive experimentation on multiple classification tasks including distribution shift benchmarks. For all datasets, if a train-validation split is already available, we use the training data for the ERM classifier in the train-twice heuristic before pooling, ranking, and repartitioning. For the end-to-end optimization, we start with pooled train-validation data and simultaneously learn the data splits and the corresponding LRWOpt model.

\subsection{Datasets} 

We use popular classification benchmarks including CIFAR-100 \cite{krizhevsky2009learning}, ImageNet-100 \cite{tian2020contrastive}, ImageNet-1K \cite{deng2009imagenet}, Aircraft \cite{maji2013fine}, Stanford Cars \cite{krause20133d}, Oxford-IIIT Fine-grained classification (Cats v/s Dogs) \cite{parkhi2012cats} and Diabetic Retinopathy (DR) dataset. Furthermore, for OOD analysis we use the ImageNet-A \cite{hendrycks2021natural}, ImageNet-R \cite{hendrycks2021many} datasets for models trained on ImageNet-1K. We also use the Camelyon \cite{bandi2018detection}, iWildCam \cite{beery2020iwildcam} dataset from the widely popular WILDS benchmark \cite{koh2021wilds} and also the country shifted test set for the DR \cite{kaggle} dataset. We further analyze for robustness in presence of instance dependent noise on the noisy version of CIFAR-10 dataset (Inst.C-10) proposed by \citet{xia2020part} and Clothing-1M \cite{xiao2015learning} dataset.
Please refer supplementary (supp.) for more details, setup for these datasets.

\subsection{Baselines}
Along with ERM, we also compare our method against various re-weighting based methods designed for improving robustness or handling noisy scenarios. These include learned re-weighting methods: MWN \cite{shu2019meta}, FSR \cite{zhang2021learning}, L2R \cite{ren2018learning}, MAPLE \cite{zhou2022model}, BiLAW \cite{holtz2022learning}, GDW \cite{chen2021generalized}, StableNet \cite{zhang2021deep} along with Margin Based Reweighting (MBR) \cite{liu2021probabilistic} and Rho-Loss \cite{mindermann2022prioritized}. Please refer supp. for more details regarding them.

\section{Results}
\subsection{In-Distribution Generalization}



\begin{figure*}[!t]
\vskip 0.2in
\begin{center}
\centerline{
    \includegraphics[width=.43\linewidth, height=0.3\linewidth]{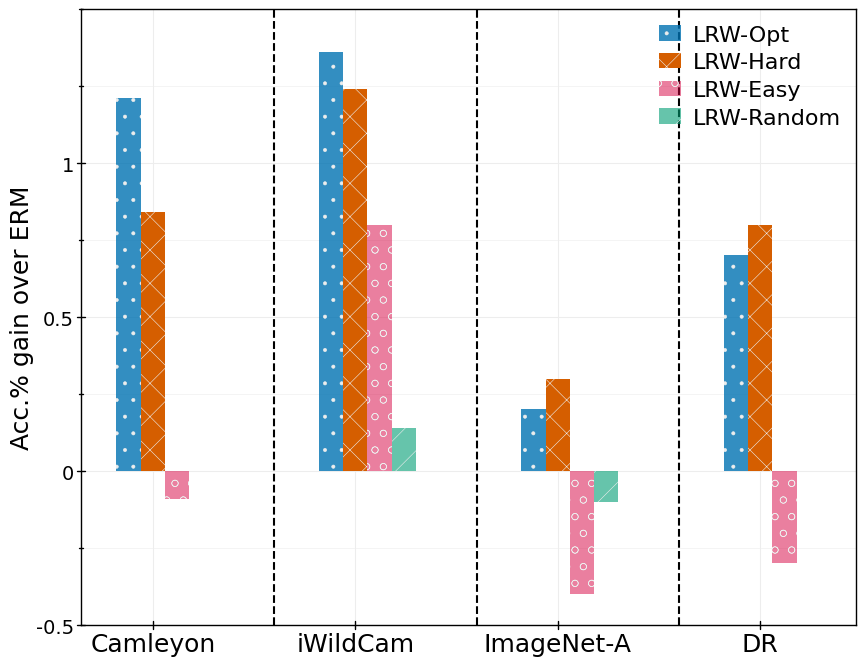}
    \includegraphics[width=.55\linewidth, height=0.3\linewidth]{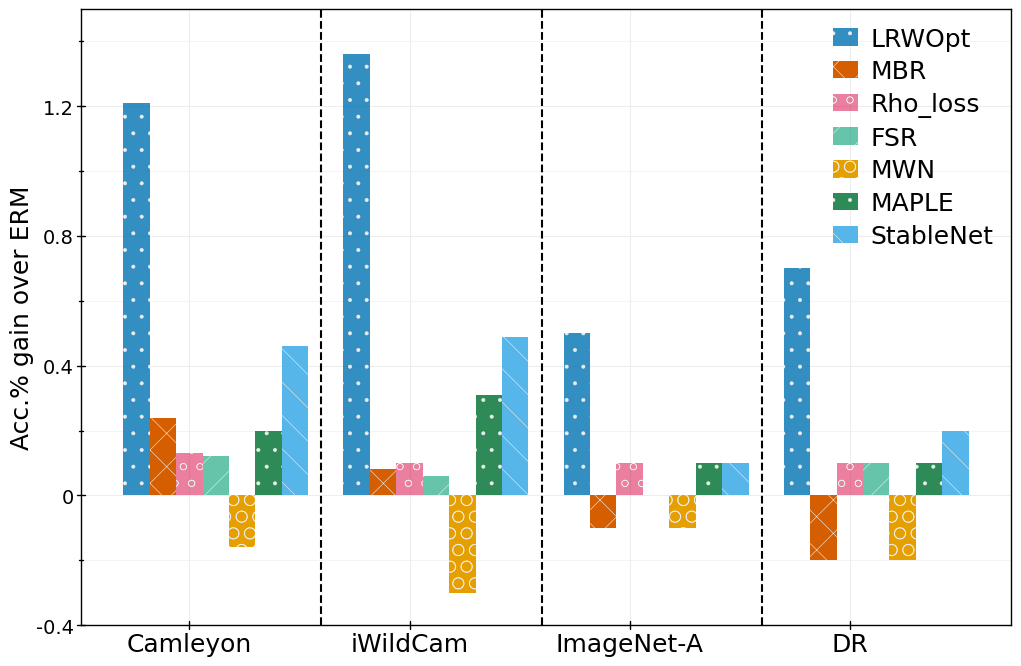}
}
\caption{ \textbf{OOD generalization.}
\textbf{Left:} Comparison of LRW variants on domain shift benchmarks. The ordering between the validation selection methods is reconfirmed on domain shift benchmark datasets as well, suggesting that earlier gains are not via overfitting to training distribution.
\textbf{Right:} Comparing against other re-weighting methods. The figure shows that our proposal (LRW-Hard) outperforms the other reweighting techniques on average, with fast sample re-weighting (FSR) begin competitive in some datasets. For absolute values refer supp.
}
\label{fig:domainshift}
\end{center}
\vskip -0.2in
\end{figure*}



\subsubsection{\ouralgo\ improves classification accuracy}

Figure~\ref{fig:acc_gain} (left) shows the LRW advantage (gains over an ERM baseline) for a range of datasets. Shown are 4 alternatives--LRW-Easy, LRW-Random, and LRW-Hard, corresponding to easy, random, and hard validation sets per the train-twice heuristic (\cref{sec:hard}), and LRWOpt, which is the end-to-end optimization of LRW classifier and train-validation split (\cref{sec:e2e}). Refer supplementary for absolute accuracy values.

\begin{itemize}
\item Gains vs ERM are strictly ordered in nearly all datasets--LRW-Easy $<$ LRW-Random $<$ LRW-Hard--robustly confirming our hypothesis that validation sets in LRW classifiers need to be optimized. The difference between LRW-Hard and LRW-Easy is over 2.5\% relative gain for Imagenet \& CIFAR-100, and 1.6\% relative for Clothing-1M.
\item LRW-Random shows modest gains over ERM in most datasets\footnote{Previous results~\cite{ren2018learning} had shown neutral or slightly negative gains over ERM, focusing chiefly on training with noisy training labels + clean validation data. The improvement in our results are driven by the use of the meta-network.}. LRW-Hard shows significant gains, underscoring the benefit of combining LRW with validation set optimization --  0.8\% relative gain on Imagenet,  1.3\% on Clothing-1M, 2.15\% relative on CIFAR-100. 
\item The end-to-end LRWOpt matches or exceeds LRW-Hard, showcasing its effectiveness without the need for training twice. On a minority of datasets, LRW-Hard is nominally better; we believe this is due to the small-sample nature of real-world datasets, and attendant estimation noise.
\end{itemize}

Although the presented accuracy is on unseen test sets, a concern may be that LRW-Hard overfits to the training data distribution and results in brittle classifiers. To address this, we performed a number of experiments on datasets with matched out-of-domain test sets (Diabetic retinopathy~\cite{kaggle,aptos}, Camelyon~\cite{bandi2018detection}, WildCam~\cite{beery2020iwildcam}) in next subsection.

\subsubsection{\ouralgo\ outperforms existing re-weighting baselines}

We  now compare our LRWOpt method against other reweighting methods-- FSR~\cite{zhang2021learning}) , MBR~\cite{liu2021probabilistic}, MAPLE \cite{zhou2022model}, MWN \cite{shu2019meta}, StableNet \cite{zhang2021deep}, BiLAW \cite{holtz2022learning})--see \cref{fig:acc_gain} (right) for more details. While FSR, RHO-Loss, MBR, MAPLE, StableNet and BiLAW are proposals in the literature for learned reweighting of the training data (via meta-learning) in order to address various clean- and noisy-label scenarios, MBR is an extrapolation of a proposal for ad-hoc upweighting of \textit{adversarial examples} alone~\cite{liu2021probabilistic}, for increased robustness against those specific adversarial attacks. In similar spirit, the proposal ``just train twice''~\cite{liu2021just} also suggests an ad-hoc upweighting of poor performing instances in a second round of training. RHO-loss~\cite{mindermann2022prioritized}, on the other hand, follows the harder version as against all these methods and selects only some points from a batch which minimize the holdout set loss for efficient training. \\
On the datasets tested, we see that LRWOpt clearly outperforms all these baseline methods consistently across all the datasets. It is followed by MAPLE, StableNet which are designed keeping group/OOD robustness in mind. FSR and Rho-Loss perform similar to ERM only. 
\noindent
Interestingly, MBR and MWN are among the worst performers, showing that it is insufficient to simply upweight instances based on a first-round estimate of margin and the commonly followed bi-level optimization procedure by using free-parameters/loss based re-weighting, even though capable of providing unbiasedness, is not sufficient and requires another level of optimization for finding the appropriate validation set. 


\subsection{Out-Of-Distribution (OOD) generalization}\label{sec:oodval}

We further evaluate the robustness of our learned representations on the standard out-of-distribution datasets from the Wilds benchmark namely the camelyon \cite{bandi2018detection} and the iwildcam \cite{beery2020iwildcam} datasets along with the Diabetic Retinopathy dataset with a country shifted test set (APTOS test dataset) \cite{kaggle}. For the first two, 10\% examples from the train set, sampled randomly, are used as val set. For the last one, a separate validation set, with same domain as the train set, is provided. Refer supplementary for more details regarding the dataset details and our evaluation setup.
Furthermore, we also analyze the performance on the ImageNet-A dataset by using both training and validation data from ImageNet-1K. 
 Our primary goal here is to check whether the gains we saw above are primarily driven by in-domain learning, or a broader improvement in generalization capacity over the existing learned reweighting methods
 --as a result, we restrict ourselves to comparison among ERM, the LRW variants along with LRWOpt and the re-weighting baselines, rather than the substantial literature on domain shift. \cref{fig:domainshift} (left) confirms that our learned classifiers generalize better to domain shift data as compared to ERM classifiers; further, the ordering between the different validation datasets is largely preserved.
\cref{fig:domainshift} (right) provides evaluation of our methods and other re-weighting baselines. These include the standard bi-level optimization based ones like MWN, FSR along with ones designed for OOD generalization/robustness like MAPLE, StableNet, Rho-Loss and MBR. It can be observed that our method surpasses all of these baselines thereby showing the importance of validation set optimization problem for the bi-level optimization based re-weighting methods, introduced in the paper. StableNet comes out to be the closest competitor of our method which has been designed explicitly for OOD learning. Rest all the re-weighting methods perform similar to ERM with minor gains/losses. Absolute accuracy values for this analysis are provided in the supplementary.
\\


\subsection{Practical Label Noise Settings}
We now study the noisy label setting with a focus on real-world label noise using datasets like Clothing1M \cite{xiao2015learning}, and noisy CIFAR-10  (Inst. C-10~\cite{xia2020part}), which contain instance conditioned noise as against randomly flipping labels or adding uniform noise. This noise can be treated as a measure of instance hardness rather than labeling error (refer supplementary for more details). Table \ref{tab:int_noise} compares LRWOpt method against bi-level optimization approaches like MWN, FSR, L2R, GDW which although specifically designed for noisy labels have been primarily tested on label flipping, uniform noise data. We also compare against  MAPLE, designed for general robustness. On Inst. C-10, our method clearly surpasses all baselines, with 1.53\% accuracy gain. On Clothing-1M, we show about 0.85\% gain over the best performing GDW baseline. For other noisy settings like random label flips, prior work on outlier robust DRO \cite{zhai2021doro} has shown good results using loss clipping (e.g. exclude k\% of highest loss as label noise), which could be co-opted into our work.

\subsection{Skewed Labels}
We now study the skewed label setting, comparing against instance based reweighting schemes proposed with this setting in mind (Meta-Weight-Net, FSR and another recently proposed re-weighting method GDW \cite{chen2021generalized}). Table \ref{tab:skewed} shows the results for this analysis on the CIFAR-100 datasets with various skew levels ranging from 1 to 200. It can be observed that our LRWOpt significantly outperform the existing loss based reweighting methods at all of the skew levels showing accuracy gains upto 2.22\%. 

\subsection{\ouralgo\ scales to large pretrained models}
We further analyze the LRW-Hard, Easy and Random methods along with the ERM baseline on ViT-B/16 pretrained backbone trained using these techniques and evaluated on the ImageNet-1K dataset. Table \ref{tab:pretrained_imnet} shows the results from this experiments. Again LRW-Hard emerges the winner and improves significantly (around 1\%) w.r.t. other methods and the baseline advocating robustness using hard examples. Furthermore, there is 1.92\% difference in accuracy between LRW-Easy and LRW-Hard, even though we warm-started all the techniques with a pretrained backbone, showing the sensitivity of techniques to choice of validation set.


\begin{table}[]
    \centering
    \footnotesize
   \begin{tabular}{c|ccccccc}
    \toprule
        Method & Easy & Hard & Random & LRWopt & MWN  & ERM \\
        \midrule
        Acc. & 83.17 & \textbf{85.09} & 83.87 & 84.94 
 & 84.02  & 84.11 \\
        \bottomrule
    \end{tabular}
    \caption{ImageNet-1k dataset with a ViT-B/16 pretrained backbone. We compare various versions of our method with the ERM, Meta-Weight and l2s baselines.}
    \label{tab:pretrained_imnet}
\end{table}

\begin{table}[]
    \centering
    \footnotesize
    \begin{tabular}{c|ccccc}
    \toprule
        class skew & 200 & 50 & 10 &1 \\
        \midrule
        MWN \cite{shu2019meta}	& 40.11 $\pm$ 0.9 &	48.67 $\pm$ 0.7&	61.32 $\pm$ 0.6 &	74.23 $\pm$ 0.3\\
        FSR \cite{zhang2021learning}	& 38.04 $\pm$ 0.8	& 45.12	$\pm$ 0.9 & 58.38	$\pm$ 0.6 & 74.68 $\pm$ 0.2 \\
        GDW \cite{chen2021generalized} & 40.36 $\pm$ 1.0  & 48.89 $\pm$ 0.8 & 61.67 $\pm$ 0.5 & 74.41 $\pm$ 0.4 \\
        \midrule
        LRWOpt &	\textbf{42.33} $\pm$ 0.8 &	\textbf{50.77} $\pm$ 0.7 &	\textbf{63.28} $\pm$ 0.8 &
        \textbf{75.12} $\pm$ 0.3 \\
        \bottomrule
    \end{tabular}  
    \caption{Comparison, on CIFAR-100 dataset, of our LRWOpt method with existing meta-learning based reweighting methods in a label skew setup for which these methods were defined.}
    \label{tab:skewed}
\end{table}

\begin{table}[]
    \centering
    \footnotesize
   \begin{tabular}{c|ccccccc}
    \toprule
         & MWN & FSR & L2R & MAPLE & GDW  & Ours \\
        \midrule
        Inst. C-10 & 65.89& 67.12 & 70.21 & 70.34 
 & 69.12  & \textbf{71.87} \\
 Clothing-1M & 72.79 & 72.07 & {72.22} & 71.67 & 73.12 & \textbf{73.97} 
 \\
        \bottomrule
    \end{tabular}
    \caption{\textbf{Instance-dependent noise}. ImageNet-1k dataset with a ViT-B/16 pretrained backbone. We compare various versions of our method with the ERM, Meta-Weight and l2s baselines.}
    \label{tab:int_noise}
\end{table}
    


\begin{figure*}[!htb]
\vskip 0.2in
\begin{center}
\centerline{
    \includegraphics[width=0.25\textwidth]{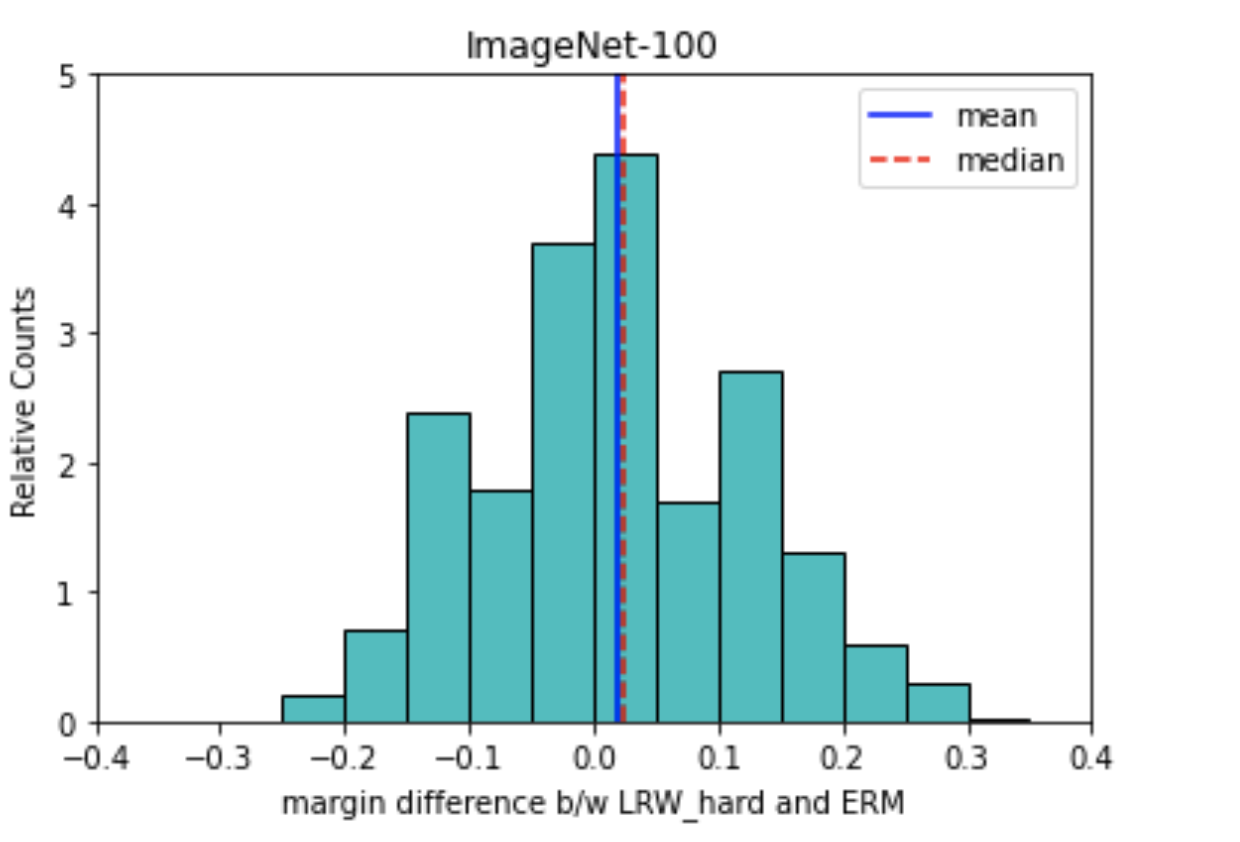}
    \includegraphics[width=.25\linewidth]{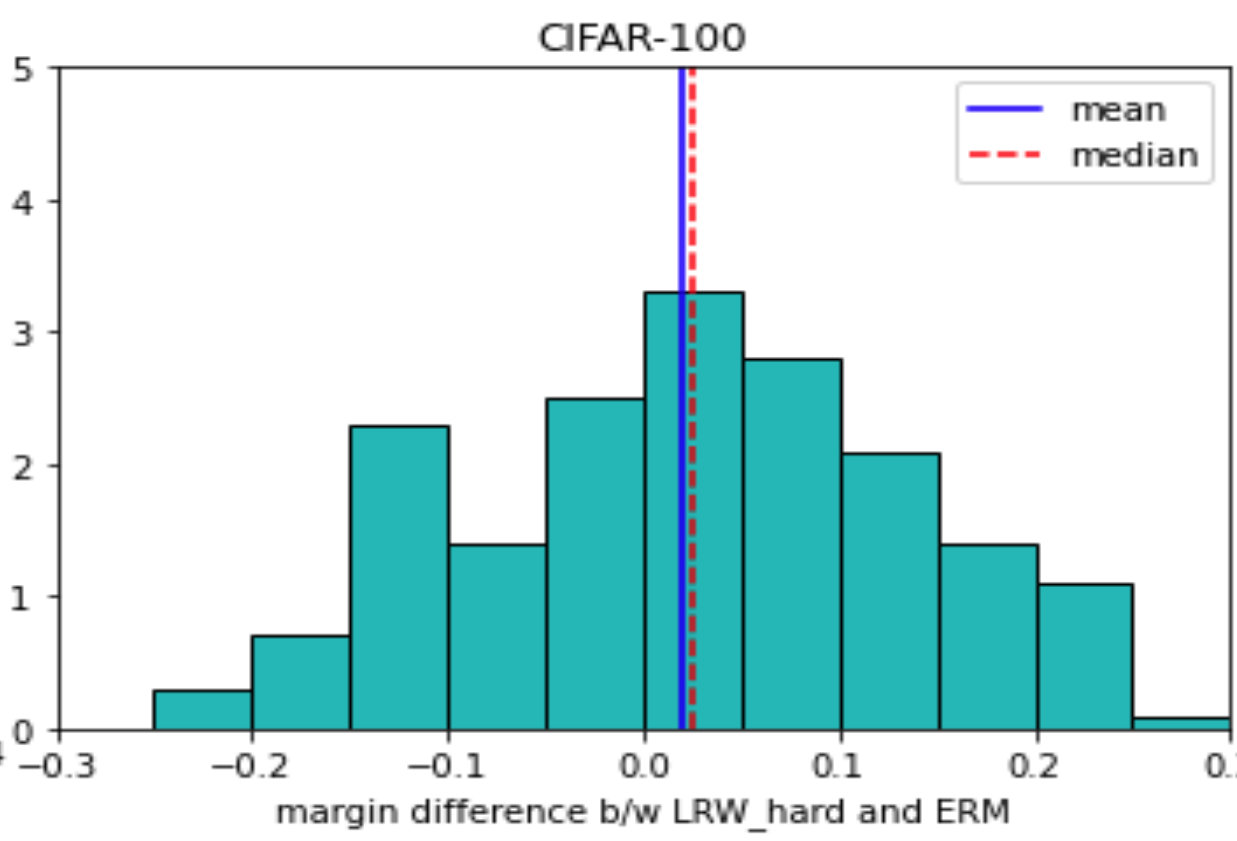}
    \includegraphics[width=0.255\textwidth]{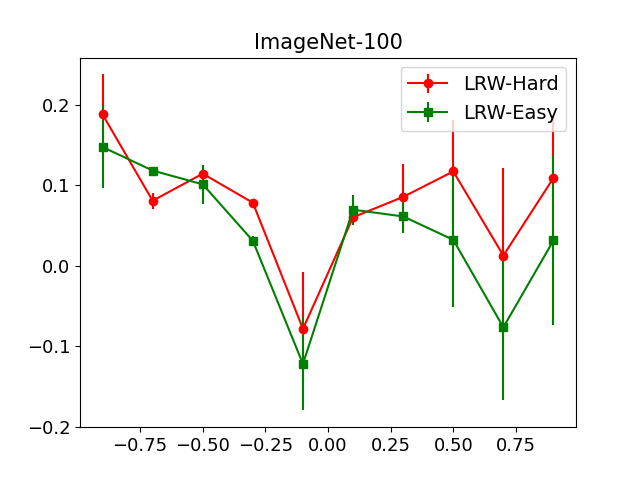}
    \includegraphics[width=0.23\textwidth]{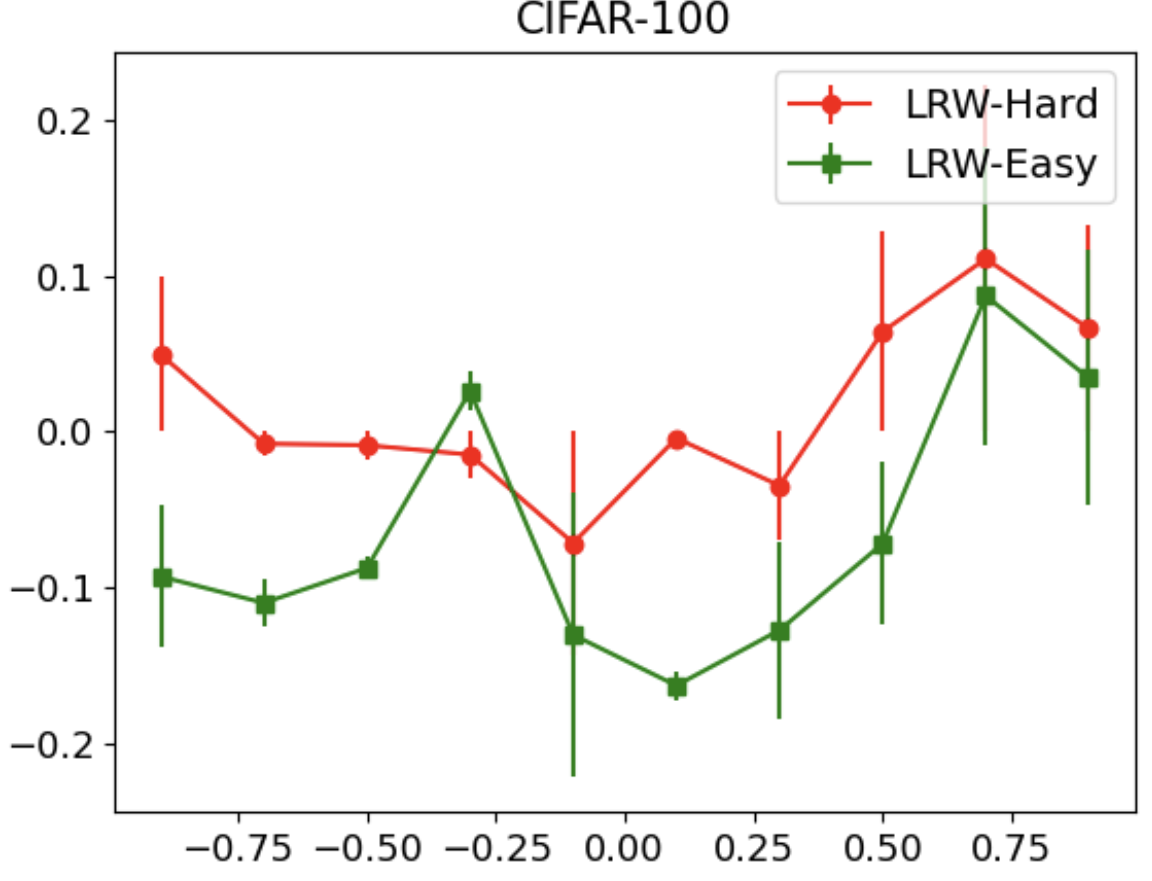}
}
\caption{\ouralgo\ improves margins of learned classifiers. (a,b): paired margin deltas between LRWOpt and ERM are moderately right-skewed with mean/median greater than zero. (c,d): As a function of ERM margin, clear separation seen between LRW-Hard (better) and LRW-Easy (worse) in terms of margin gain over ERM (errorbars are SEM). All results on unseen test data; Imagenet-100, CIFAR100 shown for brevity, with similar results for other datasets in supplementary. }
\label{fig:margin_delta}
\end{center}
\vskip -0.2in
\end{figure*}

\subsection{Leveraging OOD val set: A heuristic solution}

We now turn to the use of known hard instance, or OOD, datasets when available, in the \ouralgo\ context. For the well-studied Imagenet-1K dataset, two datasets are frequently used to gauge generalization properties of learned classifiers -- IN-A (``natural adversarial instances'', examples known to be misclassified by a trained Resnet-50 model), and IN-R (renditions of objects, such as drawings, paintings, sculptures, etc).  For each of these, we used a portion of the dataset as LRW validation set, and retained another portion for testing.  Table~\ref{tab:naturalood} shows the results of this study. We report the following two surprising results: (1)  our approach outperforms the baseline ERM classifier on not just the OOD test set, but \textit{also on the in-domain test set} (1.7-3\% and 0.5-0.7\% gains respectively), and (2) \textit{simply adding the OOD data to the train set} only moderately improves OOD test set performance while \textit{degrading} ID accuracy. These findings were replicated for both IN-A and IN-R as LRW validation sets, underscoring the real-world applicability of our findings, on large-scale datasets.\\

\begin{table}[!thb]
    \centering
    \footnotesize
    
   \begin{tabular}{c|cc|cc}
    \toprule 
    & \multicolumn{2}{c|}{IN-R val (ResNet50)} & \multicolumn{2}{c}{IN-A val (ResNet152)} \\
       	& IN1K Test	 & INR Test & IN1k Test & INA Test\\
       \midrule
        LRWOpt + INR val & 	\textbf{76.14}	& \textbf{49.1}&\textbf{ 78.12}	& \textbf{7.9}\\
ERM (IN1K)	& 75.65	& 46.1& 77.31	& 6.2\\ 
ERM (IN1K+INR)	& 74.89	& 47.4 & 77.08	& 6.6   \\
\bottomrule
   \end{tabular}


\caption{Natural hard examples as validation. LRW classifiers on Imagenet (IN1k) data and OOD validation sets (IN-A \& IN-R respectively) outperform ERM on \textit{both in-domain and OOD} test sets. Simply augmenting training data of ERM baseline does not match this (see text for details).}
 \label{tab:naturalood}
\end{table}



\vspace{-2.0mm}
\subsection{Margin maximization via meta-learning}



    

We present empirical evidence showing that \ouralgo\ has a \textit{margin maximization effect}, i.e., the learned classifiers have wider margins than ERM classifiers. Since our validation data is selected to be low-margin instances, our LRW classifier upweights, and improves performance on, training instances most similar to the low-margin validation set.  \ref{fig:margin_delta} shows two views of margin differences between \ouralgo\ and ERM on the test set, to confirm this expectation\footnote{We show Imagenet-100 \& CIFAR100 for brevity; findings consistent across all datasets (see supplementary).}. Panels (a,b) shows a histogram of paired margin differences between ERM and LRWOpt, indicating a modest right-skew with mean \& median to the right of 0. Panels (c,d) shows LRW-Hard and LRW-Easy deltas w.r.t. ERM, averaged over ERM margin buckets (mean and S.E.M. errorbars)--across the board (i.e., for most values of ERM margin), LRW-Hard contrasts with ERM better than LRW-Easy.


\noindent
Further details on Algorithm 1, the time complexity of the proposed scheme compared to ERM, the training setup, ablation study over the validation set selection/loss, and discussion regarding the early-stage performance of the proposed scheme are provided in the supplementary.

 \vspace{-0.1in}
\section{Discussion \& conclusion}

We proposed the novel idea of \textit{optimizing the choice of validation data} in a learned-reweighting setting, and showed that it gives significant gains over ERM on a range of datasets and domain generalization benchmarks. In particular, in most experiments on clean data, we saw a clear ordering between choosing easy, random, and hard samples as validation data in an LRW setup, with the latter performing best and delivering consistent gains over ERM. This ordering provides broad support to our primary hypothesis: meta-optimization of the metalearning workflow in LRW is an important area of research with potential for substantial impact. Our specific heuristic of choosing low-margin points is a simple, straightforward instantiation of what we believe is a family of optimization algorithms that can be brought to bear on the general problem of optimizing meta-learned classifiers. Indeed, our heuristic is not competitive under very high label noise scenarios, suggesting the need for follow-on work that explores more formal, optimization-driven approaches towards this problem. We are also excited about elucidating the theoretical basis of observed gains in the \ouralgo\ framework.

{
    \small
    \bibliographystyle{ieeenat_fullname}
    \bibliography{main}

\begin{thebibliography}{45}
\providecommand{\natexlab}[1]{#1}
\providecommand{\url}[1]{\texttt{#1}}
\expandafter\ifx\csname urlstyle\endcsname\relax
  \providecommand{\doi}[1]{doi: #1}\else
  \providecommand{\doi}{doi: \begingroup \urlstyle{rm}\Url}\fi

\bibitem[kag(2015)]{kaggle}
Kaggle. diabetic retinopathy detection challenge, 2015, 2015.

\bibitem[Bandi et~al.(2018)Bandi, Geessink, Manson, Van~Dijk, Balkenhol, Hermsen, Bejnordi, Lee, Paeng, Zhong, et~al.]{bandi2018detection}
Peter Bandi, Oscar Geessink, Quirine Manson, Marcory Van~Dijk, Maschenka Balkenhol, Meyke Hermsen, Babak~Ehteshami Bejnordi, Byungjae Lee, Kyunghyun Paeng, Aoxiao Zhong, et~al.
\newblock From detection of individual metastases to classification of lymph node status at the patient level: the camelyon17 challenge.
\newblock \emph{IEEE transactions on medical imaging}, 38\penalty0 (2):\penalty0 550--560, 2018.

\bibitem[Bao and Barzilay(2022)]{bao2022learning}
Yujia Bao and Regina Barzilay.
\newblock Learning to split for automatic bias detection.
\newblock \emph{arXiv preprint arXiv:2204.13749}, 2022.

\bibitem[Beery et~al.(2020)Beery, Cole, and Gjoka]{beery2020iwildcam}
Sara Beery, Elijah Cole, and Arvi Gjoka.
\newblock The iwildcam 2020 competition dataset.
\newblock \emph{arXiv preprint arXiv:2004.10340}, 2020.

\bibitem[Ben-Tal et~al.(2013)Ben-Tal, Den~Hertog, De~Waegenaere, Melenberg, and Rennen]{ben2013robust}
Aharon Ben-Tal, Dick Den~Hertog, Anja De~Waegenaere, Bertrand Melenberg, and Gijs Rennen.
\newblock Robust solutions of optimization problems affected by uncertain probabilities.
\newblock \emph{Management Science}, 59\penalty0 (2):\penalty0 341--357, 2013.

\bibitem[Chen et~al.(2021)Chen, Zheng, Chen, Dong, Liu, Liu, and Dou]{chen2021generalized}
Can Chen, Shuhao Zheng, Xi Chen, Erqun Dong, Xue~Steve Liu, Hao Liu, and Dejing Dou.
\newblock Generalized dataweighting via class-level gradient manipulation.
\newblock \emph{Advances in Neural Information Processing Systems}, 34:\penalty0 14097--14109, 2021.

\bibitem[Chen et~al.(2022)Chen, Ma, and Zhou]{chen2022accelerated}
Ziyi Chen, Shaocong Ma, and Yi Zhou.
\newblock Accelerated proximal alternating gradient-descent-ascent for nonconvex minimax machine learning.
\newblock In \emph{2022 IEEE International Symposium on Information Theory (ISIT)}, pages 672--677. IEEE, 2022.

\bibitem[Deng et~al.(2009)Deng, Dong, Socher, Li, Li, and Fei-Fei]{deng2009imagenet}
Jia Deng, Wei Dong, Richard Socher, Li-Jia Li, Kai Li, and Li Fei-Fei.
\newblock Imagenet: A large-scale hierarchical image database.
\newblock In \emph{2009 IEEE conference on computer vision and pattern recognition}, pages 248--255. Ieee, 2009.

\bibitem[Duchi and Namkoong(2018)]{duchi2018learning}
John Duchi and Hongseok Namkoong.
\newblock Learning models with uniform performance via distributionally robust optimization.
\newblock \emph{arXiv preprint arXiv:1810.08750}, 2018.

\bibitem[Faw et~al.(2020)Faw, Sen, Shanmugam, Caramanis, and Shakkottai]{faw2020mix}
Matthew Faw, Rajat Sen, Karthikeyan Shanmugam, Constantine Caramanis, and Sanjay Shakkottai.
\newblock Mix and match: an optimistic tree-search approach for learning models from mixture distributions.
\newblock \emph{Advances in Neural Information Processing Systems}, 33:\penalty0 11010--11021, 2020.

\bibitem[Finn et~al.(2017)Finn, Abbeel, and Levine]{finn2017model}
Chelsea Finn, Pieter Abbeel, and Sergey Levine.
\newblock Model-agnostic meta-learning for fast adaptation of deep networks.
\newblock In \emph{International conference on machine learning}, pages 1126--1135. PMLR, 2017.

\bibitem[Franceschi et~al.(2018)Franceschi, Frasconi, Salzo, Grazzi, and Pontil]{franceschi2018bilevel}
Luca Franceschi, Paolo Frasconi, Saverio Salzo, Riccardo Grazzi, and Massimilano Pontil.
\newblock Bilevel programming for hyperparameter optimization and meta-learning, 2018.

\bibitem[Hendrycks et~al.(2021{\natexlab{a}})Hendrycks, Basart, Mu, Kadavath, Wang, Dorundo, Desai, Zhu, Parajuli, Guo, et~al.]{hendrycks2021many}
Dan Hendrycks, Steven Basart, Norman Mu, Saurav Kadavath, Frank Wang, Evan Dorundo, Rahul Desai, Tyler Zhu, Samyak Parajuli, Mike Guo, et~al.
\newblock The many faces of robustness: A critical analysis of out-of-distribution generalization.
\newblock In \emph{Proceedings of the IEEE/CVF International Conference on Computer Vision}, pages 8340--8349, 2021{\natexlab{a}}.

\bibitem[Hendrycks et~al.(2021{\natexlab{b}})Hendrycks, Zhao, Basart, Steinhardt, and Song]{hendrycks2021natural}
Dan Hendrycks, Kevin Zhao, Steven Basart, Jacob Steinhardt, and Dawn Song.
\newblock Natural adversarial examples.
\newblock In \emph{Proceedings of the IEEE/CVF Conference on Computer Vision and Pattern Recognition}, pages 15262--15271, 2021{\natexlab{b}}.

\bibitem[Holtz et~al.(2022)Holtz, Weng, and Mishne]{holtz2022learning}
Chester Holtz, Tsui-Wei Weng, and Gal Mishne.
\newblock Learning sample reweighting for accuracy and adversarial robustness.
\newblock \emph{arXiv preprint arXiv:2210.11513}, 2022.

\bibitem[Hospedales et~al.(2020)Hospedales, Antoniou, Micaelli, and Storkey]{hospedales2020meta}
Timothy Hospedales, Antreas Antoniou, Paul Micaelli, and Amos Storkey.
\newblock Meta-learning in neural networks: A survey.
\newblock \emph{arXiv preprint arXiv:2004.05439}, 2020.

\bibitem[Jain and Shenoy(2024)]{jain2024instance}
Nishant Jain and Pradeep Shenoy.
\newblock Instance-conditional timescales of decay for non-stationary learning.
\newblock In \emph{Proceedings of the AAAI Conference on Artificial Intelligence}, pages 12773--12781, 2024.

\bibitem[Jain et~al.(2024)Jain, Shanmugam, and Shenoy]{jain2024learning}
Nishant Jain, Karthikeyan Shanmugam, and Pradeep Shenoy.
\newblock Learning model uncertainty as variance-minimizing instance weights.
\newblock In \emph{The Twelfth International Conference on Learning Representations}, 2024.

\bibitem[Koh et~al.(2021)Koh, Sagawa, Marklund, Xie, Zhang, Balsubramani, Hu, Yasunaga, Phillips, Gao, et~al.]{koh2021wilds}
Pang~Wei Koh, Shiori Sagawa, Henrik Marklund, Sang~Michael Xie, Marvin Zhang, Akshay Balsubramani, Weihua Hu, Michihiro Yasunaga, Richard~Lanas Phillips, Irena Gao, et~al.
\newblock Wilds: A benchmark of in-the-wild distribution shifts.
\newblock In \emph{International Conference on Machine Learning}, pages 5637--5664. PMLR, 2021.

\bibitem[Krause et~al.(2013)Krause, Stark, Deng, and Fei-Fei]{krause20133d}
Jonathan Krause, Michael Stark, Jia Deng, and Li Fei-Fei.
\newblock 3d object representations for fine-grained categorization.
\newblock In \emph{Proceedings of the IEEE international conference on computer vision workshops}, pages 554--561, 2013.

\bibitem[Krizhevsky et~al.(2009)Krizhevsky, Hinton, et~al.]{krizhevsky2009learning}
Alex Krizhevsky, Geoffrey Hinton, et~al.
\newblock Learning multiple layers of features from tiny images.
\newblock 2009.

\bibitem[Li et~al.(2020)Li, Yu, Jin, Fu, Xing, and Heng]{li2020difficulty}
Xiaomeng Li, Lequan Yu, Yueming Jin, Chi-Wing Fu, Lei Xing, and Pheng-Ann Heng.
\newblock Difficulty-aware meta-learning for rare disease diagnosis.
\newblock In \emph{Medical Image Computing and Computer Assisted Intervention--MICCAI 2020: 23rd International Conference, Lima, Peru, October 4--8, 2020, Proceedings, Part I 23}, pages 357--366. Springer, 2020.

\bibitem[Liu et~al.(2021{\natexlab{a}})Liu, Haghgoo, Chen, Raghunathan, Koh, Sagawa, Liang, and Finn]{liu2021just}
Evan~Z Liu, Behzad Haghgoo, Annie~S Chen, Aditi Raghunathan, Pang~Wei Koh, Shiori Sagawa, Percy Liang, and Chelsea Finn.
\newblock Just train twice: Improving group robustness without training group information.
\newblock In \emph{International Conference on Machine Learning}, pages 6781--6792. PMLR, 2021{\natexlab{a}}.

\bibitem[Liu et~al.(2021{\natexlab{b}})Liu, Han, Liu, Gong, Niu, Zhou, Sugiyama, et~al.]{liu2021probabilistic}
Feng Liu, Bo Han, Tongliang Liu, Chen Gong, Gang Niu, Mingyuan Zhou, Masashi Sugiyama, et~al.
\newblock Probabilistic margins for instance reweighting in adversarial training.
\newblock \emph{Advances in Neural Information Processing Systems}, 34:\penalty0 23258--23269, 2021{\natexlab{b}}.

\bibitem[Maji et~al.(2013)Maji, Rahtu, Kannala, Blaschko, and Vedaldi]{maji2013fine}
Subhransu Maji, Esa Rahtu, Juho Kannala, Matthew Blaschko, and Andrea Vedaldi.
\newblock Fine-grained visual classification of aircraft.
\newblock \emph{arXiv preprint arXiv:1306.5151}, 2013.

\bibitem[Mindermann et~al.(2022)Mindermann, Brauner, Razzak, Sharma, Kirsch, Xu, H{\"o}ltgen, Gomez, Morisot, Farquhar, et~al.]{mindermann2022prioritized}
S{\"o}ren Mindermann, Jan~M Brauner, Muhammed~T Razzak, Mrinank Sharma, Andreas Kirsch, Winnie Xu, Benedikt H{\"o}ltgen, Aidan~N Gomez, Adrien Morisot, Sebastian Farquhar, et~al.
\newblock Prioritized training on points that are learnable, worth learning, and not yet learnt.
\newblock In \emph{International Conference on Machine Learning}, pages 15630--15649. PMLR, 2022.

\bibitem[Mohri et~al.(2019)Mohri, Sivek, and Suresh]{mohri2019agnostic}
Mehryar Mohri, Gary Sivek, and Ananda~Theertha Suresh.
\newblock Agnostic federated learning.
\newblock In \emph{International Conference on Machine Learning}, pages 4615--4625. PMLR, 2019.

\bibitem[Namkoong and Duchi(2017)]{namkoong2017variance}
Hongseok Namkoong and John~C Duchi.
\newblock Variance-based regularization with convex objectives.
\newblock \emph{Advances in neural information processing systems}, 30, 2017.

\bibitem[Neyshabur et~al.(2018)Neyshabur, Li, Bhojanapalli, LeCun, and Srebro]{neyshabur2018role}
Behnam Neyshabur, Zhiyuan Li, Srinadh Bhojanapalli, Yann LeCun, and Nathan Srebro.
\newblock The role of over-parametrization in generalization of neural networks.
\newblock In \emph{International Conference on Learning Representations}, 2018.

\bibitem[Nichol et~al.(2018)Nichol, Achiam, and Schulman]{nichol2018first}
Alex Nichol, Joshua Achiam, and John Schulman.
\newblock On first-order meta-learning algorithms.
\newblock \emph{arXiv preprint arXiv:1803.02999}, 2018.

\bibitem[Parkhi et~al.(2012)Parkhi, Vedaldi, Zisserman, and Jawahar]{parkhi2012cats}
Omkar~M Parkhi, Andrea Vedaldi, Andrew Zisserman, and CV Jawahar.
\newblock Cats and dogs.
\newblock In \emph{2012 IEEE conference on computer vision and pattern recognition}, pages 3498--3505. IEEE, 2012.

\bibitem[Raghu et~al.(2019)Raghu, Raghu, Bengio, and Vinyals]{raghu2019rapid}
Aniruddh Raghu, Maithra Raghu, Samy Bengio, and Oriol Vinyals.
\newblock Rapid learning or feature reuse? towards understanding the effectiveness of maml.
\newblock \emph{arXiv preprint arXiv:1909.09157}, 2019.

\bibitem[Ren et~al.(2018)Ren, Zeng, Yang, and Urtasun]{ren2018learning}
Mengye Ren, Wenyuan Zeng, Bin Yang, and Raquel Urtasun.
\newblock Learning to reweight examples for robust deep learning.
\newblock In \emph{International conference on machine learning}, pages 4334--4343. PMLR, 2018.

\bibitem[Shu et~al.(2019)Shu, Xie, Yi, Zhao, Zhou, Xu, and Meng]{shu2019meta}
Jun Shu, Qi Xie, Lixuan Yi, Qian Zhao, Sanping Zhou, Zongben Xu, and Deyu Meng.
\newblock Meta-weight-net: Learning an explicit mapping for sample weighting.
\newblock \emph{Advances in neural information processing systems}, 32, 2019.

\bibitem[Society(2019)]{aptos}
Asia Pacific Tele-Ophthalmology Society.
\newblock Aptos 2019 blindness detection dataset, 2019.

\bibitem[Sugiyama et~al.(2008)Sugiyama, Suzuki, Nakajima, Kashima, Von~B{\"u}nau, and Kawanabe]{sugiyama2008direct}
Masashi Sugiyama, Taiji Suzuki, Shinichi Nakajima, Hisashi Kashima, Paul Von~B{\"u}nau, and Motoaki Kawanabe.
\newblock Direct importance estimation for covariate shift adaptation.
\newblock \emph{Annals of the Institute of Statistical Mathematics}, 60:\penalty0 699--746, 2008.

\bibitem[Tian et~al.(2020)Tian, Krishnan, and Isola]{tian2020contrastive}
Yonglong Tian, Dilip Krishnan, and Phillip Isola.
\newblock Contrastive multiview coding.
\newblock In \emph{European conference on computer vision}, pages 776--794. Springer, 2020.

\bibitem[Xia et~al.(2020)Xia, Liu, Han, Wang, Gong, Liu, Niu, Tao, and Sugiyama]{xia2020part}
Xiaobo Xia, Tongliang Liu, Bo Han, Nannan Wang, Mingming Gong, Haifeng Liu, Gang Niu, Dacheng Tao, and Masashi Sugiyama.
\newblock Part-dependent label noise: Towards instance-dependent label noise.
\newblock \emph{Advances in Neural Information Processing Systems}, 33:\penalty0 7597--7610, 2020.

\bibitem[Xiao et~al.(2023)Xiao, Lu, and Chen]{xiao2023generalized}
Quan Xiao, Songtao Lu, and Tianyi Chen.
\newblock A generalized alternating method for bilevel learning under the polyak-{\l}ojasiewicz condition.
\newblock \emph{arXiv e-prints}, pages arXiv--2306, 2023.

\bibitem[Xiao et~al.(2015)Xiao, Xia, Yang, Huang, and Wang]{xiao2015learning}
Tong Xiao, Tian Xia, Yi Yang, Chang Huang, and Xiaogang Wang.
\newblock Learning from massive noisy labeled data for image classification.
\newblock In \emph{Proceedings of the IEEE conference on computer vision and pattern recognition}, pages 2691--2699, 2015.

\bibitem[Zhai et~al.(2021)Zhai, Dan, Kolter, and Ravikumar]{zhai2021doro}
Runtian Zhai, Chen Dan, Zico Kolter, and Pradeep Ravikumar.
\newblock Doro: Distributional and outlier robust optimization.
\newblock In \emph{International Conference on Machine Learning}, pages 12345--12355. PMLR, 2021.

\bibitem[Zhang et~al.(2021)Zhang, Cui, Xu, Zhou, He, and Shen]{zhang2021deep}
Xingxuan Zhang, Peng Cui, Renzhe Xu, Linjun Zhou, Yue He, and Zheyan Shen.
\newblock Deep stable learning for out-of-distribution generalization.
\newblock In \emph{Proceedings of the IEEE/CVF Conference on Computer Vision and Pattern Recognition}, pages 5372--5382, 2021.

\bibitem[Zhang and Pfister(2021)]{zhang2021learning}
Zizhao Zhang and Tomas Pfister.
\newblock Learning fast sample re-weighting without reward data.
\newblock In \emph{Proceedings of the IEEE/CVF International Conference on Computer Vision}, pages 725--734, 2021.

\bibitem[Zhou et~al.(2022)Zhou, Lin, Pi, Zhang, Xu, Cui, and Zhang]{zhou2022model}
Xiao Zhou, Yong Lin, Renjie Pi, Weizhong Zhang, Renzhe Xu, Peng Cui, and Tong Zhang.
\newblock Model agnostic sample reweighting for out-of-distribution learning.
\newblock In \emph{International Conference on Machine Learning}, pages 27203--27221. PMLR, 2022.

\bibitem[Zintgraf et~al.(2018)Zintgraf, Shiarlis, Kurin, Hofmann, and Whiteson]{zintgraf2018caml}
Luisa~M Zintgraf, Kyriacos Shiarlis, Vitaly Kurin, Katja Hofmann, and Shimon Whiteson.
\newblock Caml: Fast context adaptation via meta-learning.
\newblock 2018.

\end{thebibliography}
}

\twocolumn[\section*{\centering \Large Supplementary Material}]

\appendix
\section{Algorithmic Description}

 Algorithm 1 in the main paper covers the details of the one-shot LRWOpt scheme. It requires a set of initialized Splitter parameters $\phi$, Meta-Network parameters $\Theta$ and classifier parameters $\Theta$ and finally outputs a set of optimal classifier parameters $\theta^*$. 
Also, the splitting of the total dataset ($\mathcal{D}$) into train ($\mathcal{S}^{tr}$) and validation set $\mathcal{S}^{val}$ is done using the Splitter parameterized as a neural network $f_\Theta$ ($\Theta$ being the parameters) and $0<f_\Theta(x,y)<1$ for any instance (x,y). 
The examples with $f_\Theta(x,y)>0.5$ are put into validation set. $F_\Theta$ denotes the application of this splitting function onto the overall dataset outputting  train and validation subsets, \textit{i.e.} $F_\Theta(\mathcal{D}) = \{x_i,y_i:(x_i,y_i)\in \mathcal{D}; f_\Theta(x_i,y_i) \geq 0.5\}, \{x_i,y_i:(x_i,y_i)\in \mathcal{D}; f_\Theta(x_i,y_i)<0.5\}$. Here, instead of applying the nested loops for the bi-level setup at the epoch level, we have done it at the batch level. Both of them yield nearly similar results.

\section{Proof of Theorem~$1$}
\begin{reptheorem}{asymptotics}[Asymptotics]
Consider the tri-level optimization in Equation~$(2)$. Suppose the weighting function $\phi(\cdot),$ and splitting function $\Theta(\cdot)$ are dependent on both $x$ and $y$. Let's suppose $N+M\to\infty$, and $\lim_{N,M\to\infty}\frac{M}{N+M} = \delta$. Moreover, suppose the domains of $\phi, \theta, \Theta$ are very large and contain the set of all measurable functions. Then the objective of \ouralgo\  is equivalent to
\begin{equation}
\label{eqn:dual_dro_rep}
    \max_{S': |S'|= \delta (N+M) } \min_{\theta} \sum_{(x,y)\in S'} \ell(y, f_\theta(x)).
\end{equation}
intuitively, the model picks points close to boundary into validation set. 
\end{reptheorem}
\begin{proof}
    Let $S'$ and $S\setminus S'$ be any partitioning of the dataset $S$. Let $Q^{\text{val}}, Q^{\text{tr}}$ be the probability distributions corresponding to $S'$, and $S\setminus S'$. The proof proceeds by showing that the following two optimization problems are equivalent
    \begin{equation}
    \begin{aligned}
    \label{eqn:bilevel_core}
        \min_{\theta} \mathbb{E}_{(x,y)\sim Q^{\text{val}}}[\ell(y, f_\theta(x))].
   \end{aligned}
    \end{equation}
    \begin{equation}
\begin{aligned}
\label{eqn:trilevel_core}
&\min_\phi \mathbb{E}_{(x,y)\sim Q^{\text{val}}} [\ell(y, f_{\theta^*(\phi)}(x))]\\
  \texttt{s.t.}   \theta^*(\phi) &= \arg \min_\theta \mathbb{E}_{(x,y)\sim Q^{\text{tr}}} [\phi(x,y) \ell(y, f_\theta(x))].
\end{aligned}
\end{equation}
Observe that the above two optimization problems are the inner optimization problems of objectives~\eqref{eqn:tri_level} and~\eqref{eqn:dual_dro_rep}.
Showing that these two are equivalent would then immediately imply that objective~\eqref{eqn:tri_level} is equivalent to objective~\eqref{eqn:dual_dro_rep}. 

 First let's consider the case where $\text{supp}(Q^{\text{tr}}) = \text{supp}(Q^{\text{val}})$. By choosing $\phi(x,y) = \frac{Q^{\text{val}}(x,y)}{Q^{\text{tr}}(x,y)}$, the constraint in Equation~\eqref{eqn:trilevel_core} can be rewritten as
\[
\theta^*(\phi) = \arg \min_\theta \mathbb{E}_{(x,y)\sim Q^{\text{val}}} [\ell(y, f_\theta(x))].
\]
Observe that this is the same as the optimization problem in Equation~\eqref{eqn:bilevel_core}. This shows that both the optimization problems in Equation~\eqref{eqn:bilevel_core},~\eqref{eqn:trilevel_core} are equivalent. 

Next, consider the case where $\text{supp}(Q^{\text{tr}}) \neq \text{supp}(Q^{\text{val}})$. This is the easy case to handle. To see this, consider the extreme case where  $\text{supp}(Q^{\text{tr}}) \cap \text{supp}(Q^{\text{val}}) = \{\}$. Since the domain of $\theta$ contains the set of all measurable functions, we can set $\phi(x,y) = 1$ and choose $\theta^*(\phi)$ to be the Bayes optimal classifier\footnote{A Bayes optimal classifier is a classifier that minimizes the expected population risk} on both $\text{supp}(Q^{\text{tr}}), \text{supp}(Q^{\text{val}})$. It is easy to verify that this is an optimizer of Equation~\eqref{eqn:trilevel_core}. This shows that Equation~\eqref{eqn:bilevel_core},~\eqref{eqn:trilevel_core} are equivalent. A similar argument can be used to handle the more general case of $\text{supp}(Q^{\text{tr}}) \neq \text{supp}(Q^{\text{val}})$. Here, we choose a $\phi$ that performs probability matching on the intersection of the two supports, and set $\phi(x,y)=1$ on the rest of the support.
\end{proof}

\section{Deriving the Update Equations}
Let us now discuss the update equation for each of the neural networks namely the Splitter Network ($\Theta$), the Meta-Network ($\phi$) and the target Prediction Network ($\theta$). As discussed in the paper and in the algorithmic description provided above, we have formulated the problem as a bi-level optimization task with $\Theta$, $\phi$ being optimized at the outer level and $\theta$ at the inner level. Therefore, for every update in $\Theta$, $\phi$, we update $\theta$ for $K$ steps as an approximation for most optimal $\theta$ for the current value of $\Theta$ and $\phi$, \textit{i.e.} $\theta^*(\phi,\Theta)$.

\noindent
\textbf{Splitter Network ($\Theta$)}. Updated at the outer loop level using the validation set to minimize the loss to identify whether a given input label pair would be predicted correctly and to maximize the loss on the validation set when using the current classifier parameters, for maximum generalization error. After $e$ epochs of the complete bi-level setup, its update equation can be written as:

\begin{equation}
\begin{aligned}
    \Theta_{e+1} = \Theta_e - \frac{\beta_1}{M}\sum_{i=1}^{M} &\nabla ({CE}(\mathbb{P}_{splitter}(z^{v}_{i}|x^{v}_{i},y^{v}_{i}),\mathbb{I}_{y^{v}_{i}}(\hat{y}^{v}_{i}))\\
    &-
    l_{val}(y^{v}_{i}, f_{\theta^{*}(\Theta,\phi)}(x^{v}_{i}))\\
    \end{aligned}
\end{equation}
As discussed in the paper, along side this loss, the regularizers proposed in \cite{bao2022learning} are also used to update the splitter.
\noindent
\textbf{Meta-Network ($\phi$)}. This is always updated alongside the splitter objective where the loss term corresponds to minimizing the error on validation set. Thus, after $e$ epochs, it can be written as: 
\begin{equation}
\begin{aligned}
\phi_e = \phi_e - \beta_2\nabla_{\phi}&\sum_{i=1}^{M} (l_{val}(y^v_i, f_{\theta^*(\Theta,\phi)}(x^v_i)) \\-{CE}&(\mathbb{P}_{splitter}(z^v_i|x^v_i,y^v_i),\mathbb{I}_{y^v_i}(\hat{y}^v_i))  )  
\end{aligned}
\end{equation}

The overall setup is based on Meta-Network being independent of the Splitter and only aimed at making classifier generalize well on the validation set and thus, the second term inside the gradient can be asssumed independent of $\phi$ leading to:
\begin{equation}
\phi_e = \phi_e - \beta_2\sum_{i=1}^{M}\frac{\partial }{\partial \phi}l_{val}(y^v_i, f_{\theta^*(\Theta,\phi)}(x^v_i))
\label{meta_final}
\end{equation}
\noindent
\textbf{Classifier Network ($\theta$)}. Given the algorithm, after $e$ epochs of the complete bi-level setup, it would have led to $Ke^{'}$ epochs over the training data, of the classifier where $e^{'}$ corresponds to number of epochs on train data while one epoch on val data is completed and $K$ is the number of times inner loop is run for every outer loop. We have $e^{'} = \frac{batch\_t}{batch\_v}e$. The classifier has to be just updated through the weighted training loss on the split $\mathcal{D}^t$:

\begin{equation}
    \theta_{Ke^{'}+1} = \theta_{Ke^{'}} - \beta_3\sum_{j=1}^{N}\nabla_\theta g_\phi(x_i)l(f_\theta(x_i),y_i)
\end{equation}

Following the recent reweighting works\cite{jain2024learning, shu2019meta}, we also approximate it as:
\begin{equation}
    \theta_{Ke^{'}+1} = \theta_{Ke^{'}} - \beta_3\sum_{j=1}^{N}g_\phi(x_i)\nabla_\theta l(f_\theta(x_i),y_i)
    \label{classifier_final}
\end{equation}

The Classifier and Meta-Network update equations (eqs. \ref{classifier_final} and \ref{meta_final}) are same as the existing instance based re-weighting works \cite{jain2024instance, jain2024learning} having a validation set bi-level set which they approximate as a single level  optimization set.
This involves creating a copy of the classifier ($\hat{\theta}$) and using that to update the meta-network ($\phi$). 
Thus, $\phi$
For more details and derivations for the update equations of classifier and Meta-Network, following works \cite{jain2024instance, jain2024learning} can be referred.

\noindent
\textbf{Early Stage Performance and Convergence:} Initially, the splitter network is likely to randomly assign data to training and validation, and the scorer network will assign random weights -- in expectation, we believe this will fall back to the baseline ERM performance during initial training epochs. Empirically, examining learning curves of LRWOpt vs ERM, we see similarity in early epochs followed by gradual divergence. Also, previous  work provides convergence guarantees for bi-level \cite{xiao2023generalized, shu2019meta} and min-max \cite{chen2022accelerated} objectives using alternating updates for learning.

\section{Experimental Details}
\subsection{Training and Evaluation}
\textbf{Architectures.} We have used following architectures for classifer: WRN28-10 for CIFAR-100, VGG-16 for ImageNet-100, ResNet-152 for Oxford-IIIT dataset,  ResNet-32 for the aircraft and stanford cars datasets, 
and ResNet-50 for the rest. We add dropout regularization, following \cite{jain2024learning}, to the classifier.
We used a pretrained backbone as the base of the meta network having the same architecture as the classifier, to which we attach a fully connected layer for predicting the instance weights.
For Splitter we follow the architecture from the learning-to-split\cite{holtz2022learning} paper again the classifier backbone in a read-only manner followed by a learnable MLP layer to predict the splitting decision.  \\
\textbf{Training.} For training the classifier, we use a batch size of 64 and image size of $224\times 224$ for all experiments except CIFAR-100 where it is $32\times 32$. We use an initial learning rate of $0.1$ for the classifier, followed by a factor 10 decay every 50 epochs. For the meta network and splitter, we fix the learning rate at $1e-3$. We use a momentum value of 0.9 for all three. We run each experiment for 100 epochs of training, for which we observed convergence in all our experiments.
 We warm-start the main classifier by training for 25 epochs on a random split, followed by updating the meta-network and splitter for every 5 updates to the classifier ($Q$=5). We used a dropout rate of 0.25 for the classifier network, with 5 evaluations for estimating variance. The split of training/validation data, for different datasets, into the train and meta train sets is provided below along with dataset descriptions. We keep the length of training set same for all the methods and the baselines and do hyperparameter tuning on the validation set for the methods not using it in their optimization. We fix delta to be 0.1, resulting in a validation set of at most 10\% of the training set;  based on our splitting criterion, this target is always reached. In our current method, even if delta is set higher, only those examples will be included for which $\Theta(x,y)<0.5$, thereby enforcing a hardness constraint.  Also, both from experiments and DRO literature, we discovered increasing delta decreases variance but increases bias. 

\noindent
\textbf{Regularizer details}. We used the following two regularizers \cite{bao2022learning}:
\vspace{-0.1in}
\begin{equation*}
    \Omega_1 = D_{KL}(\mathbb{P}(z|B(\delta))), \Omega_2 = \sum_{k\in \{0,1\}}D_{KL}(\mathbb{P}(y|z=k))
\end{equation*}
where $\mathbb{P}(z)$ denotes the percentage of examples in train or val set and $B(\delta)$ is Bernoulli($\delta$). The first regularizer guides the splitter to maintain train/validation ratio close to $\delta$ and the second aims to balance labels across splits. We included them based on prior work; however, their contribution was modest -- roughly 0.23\% accuracy across datasets, with the second regularizer contributing most of it ($\sim$ 0.2\%). 


\subsection{Datasets} 
As discussed in the paper, we have used popular classification benchmarks  CIFAR-100, ImageNet-100, ImageNet-1K, Aircraft, Stanford Cars, Oxford-IIIT Fine-grained classification (Cats v/s Dogs) and Clothing-1M. Alongside these we have used ImageNet-A, ImageNet-R, Camelyon, iWildCam and Diabetric Retiopathy dataset with a country shift setup for OOD analysis using ImageNet-1K trained models.\\
\textbf{ImageNet-1K} \cite{deng2009imagenet} consisting of 1.3M images across 1000 classes, is the largest of the datasets in our experiments. We have used tha training set as the overall dataset (train+val) for applying our setup and training baselines. \\
\textbf{ImageNet-100} \cite{tian2020contrastive}. A subset of ImageNet-1K with 100 classes, 130k training instances and 5k examples for testing as a validation set. We use 13k examples from the train set as our validation set, and the rest for training.\\
\textbf{Clothing-1M} \cite{xiao2015learning}. Around 1M images from 14 apparel classes; since images and labels are programmatically extracted from the web, there is significant label noise.  Around 72k manually refined examples form a clean subset, of which 10k examples comprise the test set and the remaining 11k are marked as validation. \\
\textbf{CIFAR-100} \cite{krizhevsky2009learning}. This dataset consists of 60k images of size $32\times 32$ spread across 100 categories. We use 50k images for training and 10k for testing. For the meta-network, we use 5k examples from the train set as validation data, and the remaining 45k examples for training the classifier. \\
\textbf{Inst. C-10}\cite{xia2020part}. The setup here is same as the CIFAR-100 dataset with 30\% noise.
\textbf{Stanford Cars} \cite{krause20133d}. This dataset contains around 16k images from 102 categories of cars based on Model, company, etc. with a roughly equal split between train and test set. We use 1K images (around 10 images per class), from the train set use as our validation set. \\
\textbf{Aircraft} \cite{maji2013fine}. It consists of 10.2k images belonging to different types of aircrafts covering 102 categories. The task involves fine-grained image classification into these 102 classes. The train, validation and test sets are equal splits of data. We combine the train and validation data for applying our scheme and limit the length of our validation set to be the length of original validation set. \\
\textbf{Oxford-IIIT pet dataset} \cite{parkhi2012cats}. It consists of 37 categories representing breeds of dogs and cats, with 200 images per category equally divided into train and test sets. We test our method on the fine-grained image classification task for this dataset. For the validation set, we use 600 examples from the train set. \\
\textbf{ImageNet-A}\cite{hendrycks2021natural}. It comprises of real-world naturally existing (unmodified) examples which have been mostly misclassified by ResNet models. It has been proposed as a test set, comprising 7500 images belonging to 200 classes of the ImageNet-1K dataset.\\
\textbf{ImageNet-R}\cite{hendrycks2021many}. 
It comprises Images styled to various artistic renditions like paintings, drawings, etc. belonging to subset of classes of the ImageNet-1K dataset. It was basically designed to test generalization onto such renditions as the ImageNet dataset is restricted to photos. It consists of 30k images belonging to 200 classes.\\
\textbf{Camelyon Dataset} \cite{bandi2018detection}. It consists from training data from various sources treated as different domains and a test dataset from completely different sources or domains. It involves the task of classification of breast cancer patients into various stages. \\
\textbf{iWildCam Dataset} \cite{beery2020iwildcam}. It is again a classification task with the train dataset consisting of 441 different locations and a total of 217k images and the test dataset consist of a disjoint set of 111 different locations and a total of 63k images, spread throughout the globe. The aim is to identify the animal species from the given image. \\
\textbf{Diabetic Retinopathy} \cite{kaggle}. It involves the classification of a given retina scans into various levels of diabetic retinopathy (total 5 levels). For this work, we have binarized the task to 2 categories : (0,1) and(2,3,4). We train the model on the Kaggle dataset extracted from hospitals in the US. For OOD testing, we use the APTOS dataset \cite{aptos}, extracted from  a different country's hospital, resulting in a significant domain shift presumably due to equipment and protocol differences. The test set consists of around 3k images and val set around 3k images.\\

\begin{table*}[!htb]
\vskip 0.15in
\begin{center}
\begin{small}
\begin{sc}
\begin{tabular}{lcccccccccccr}
\toprule
Data set & Easy & Hard & Random & Opt &ERM & MBW & RHO-Loss & FSR & MWN & MAPLE & BiLAW\\
\midrule
CIFAR-100  &79.13 &81.17 &80.02 & 81.42&79.45 &79.22 &79.63 &80.10 &  79.13
&80.12
&79.87
\\

  Aircraft &  80.87
&81.28
&80.69 &
81.78&80.22
&80.12
&80.34
&80.55 &  80.11
& 80.58
& 80.37

\\
  Stanford Cars &  80.22
&82.23
&81.35 &
82.47&80.32
&80.67
&80.48
&80.55 &  80.11
&81.40
&81.07

\\ 
  ImageNet-100 &  86.82
&87.95
&87.62 &
87.67&86.91
&86.34
&86.96
&87.18 &  86.78
& 87.67
& 87.25

\\
  ImageNet-1K &  74.17
&76.26
&74.88 &
76.61&75.65
&75.23
&75.13
&75.76 & 75.12
&75.35
&75.60

\\
  Oxford-IIIT  &91.15
&92.72
&92.38 &
93.09&92.33
&92.18
&92.35
&92.51 &  92.04
&92.17
&92.38

 \\

DR (In-Dist) & 89.86
&91.78
&91.00 & 91.89
&90.65
&90.80
&90.74
&90.91 &  90.72 & 91.06
& 91.12
\\

\bottomrule
\end{tabular}
\end{sc}
\end{small}
\end{center}
\caption{Comparison of accuracies of LRW-Hard/Easy/Random and the existing baseline-reweighting/data selecting methods along with the standard ERM classifier, on various datasets discussed in the paper.}
\label{accuracy-table}
\end{table*}

\begin{table*}[!htb]
\vskip 0.15in
\begin{center}
\begin{small}
\begin{sc}
\begin{tabular}{lcccccccccccr}
\toprule
Data set & Hard & Easy & Random & Opt &MAPLE & StableNet & RHO-Loss & FSR & MWN & MBR & ERM\\
\midrule
Camelyon &
71.06
&70.13
&70.22
&71.43
&70.35
&70.31
&71.12
&70.34
&70.06
&70.46
&70.22

\\

  iWildCam 
  &72.56
&72.12
&71.46
  &72.68
&71.59
&71.52
&71.13
&71.45
&71.02
&71.40
&71.32
\\
  ImageNet-A 
   &5.6
&4.9
 &5.2
  & 5.5
&5.4
&5.4
&5.4
&5.5
&5.2
&5.2
&5.3

\\ 
  DR (OOD) &
  86.9
&85.8
&86.1
  &86.8
&86.2
&86.3
&86.2
&86.2
&85.9
&85.9
&86.1

\\

\bottomrule
\end{tabular}
\end{sc}
\end{small}
\end{center}
\caption{Comparison of accuracies of LRW-Hard/Easy/Random and the existing baseline-reweighting/data selecting methods along with the standard ERM classifier, on various Out-of-Distribution benchmarks discussed in the paper.}
\label{accuracy-table-OOD}
\end{table*}

\begin{figure*}[!htb]
    \centering
    \includegraphics[width=0.32\textwidth]{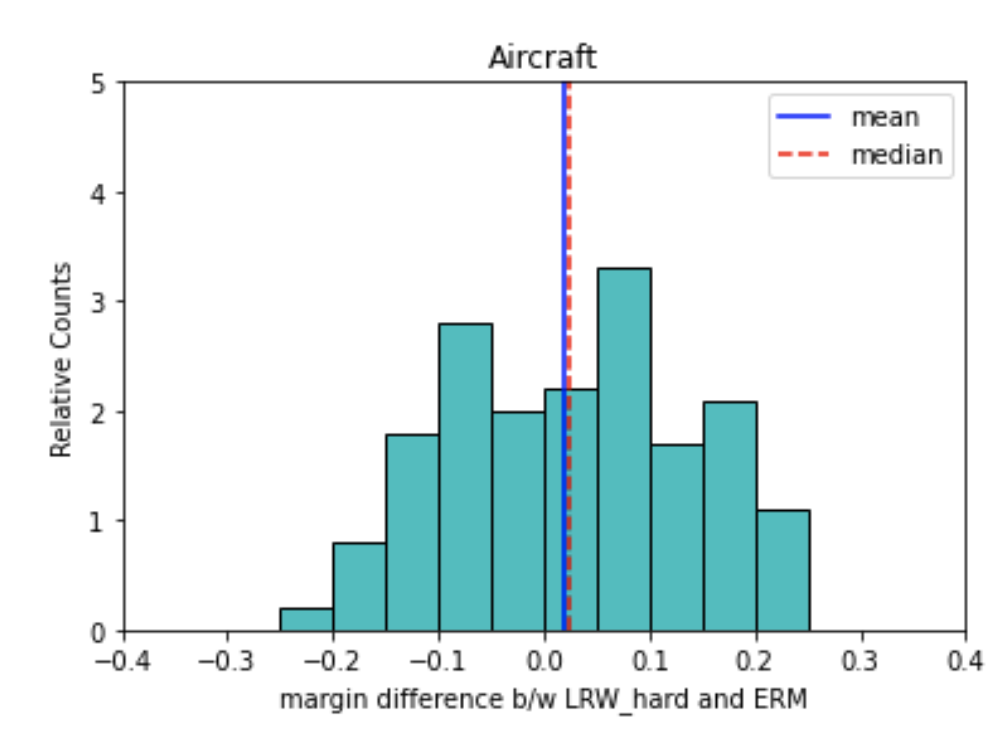}
    \includegraphics[width=0.32\textwidth]{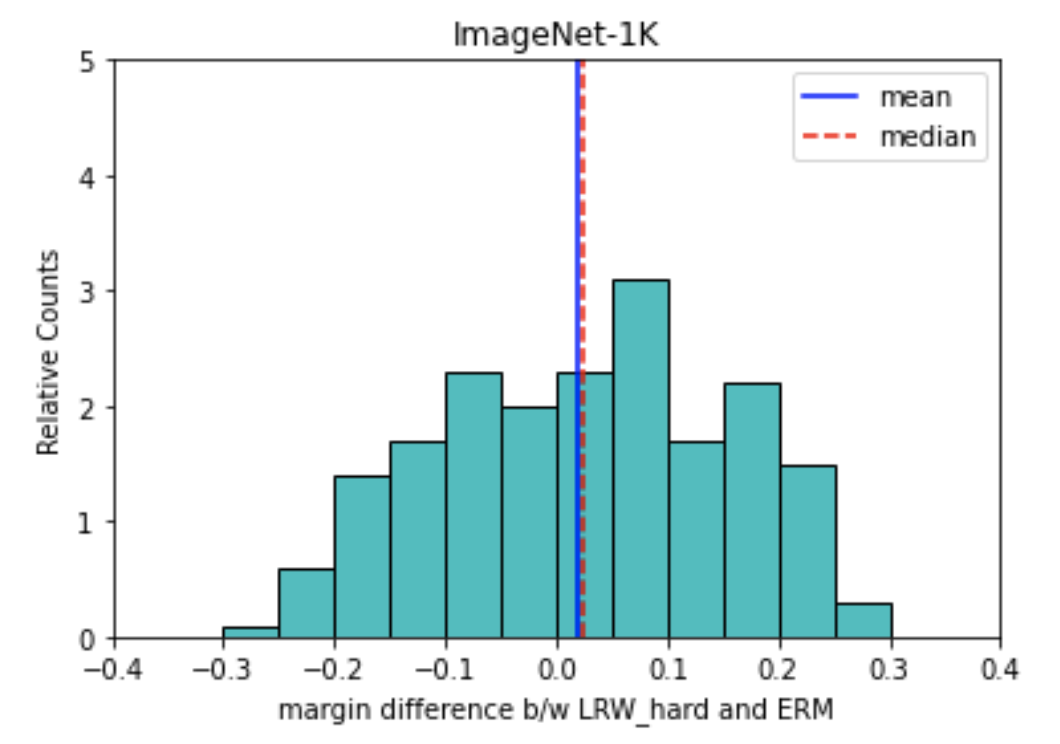}
    \includegraphics[width=0.32\textwidth]{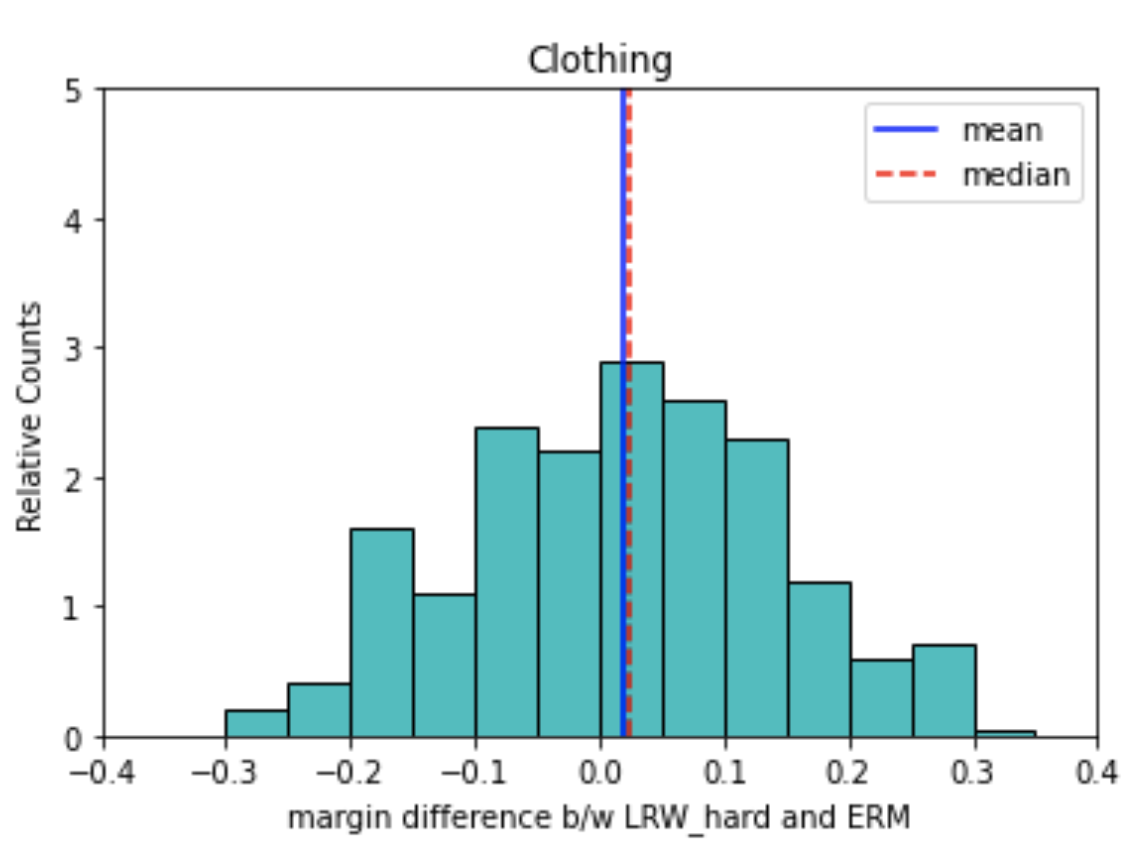}
    \includegraphics[width=0.32\textwidth]{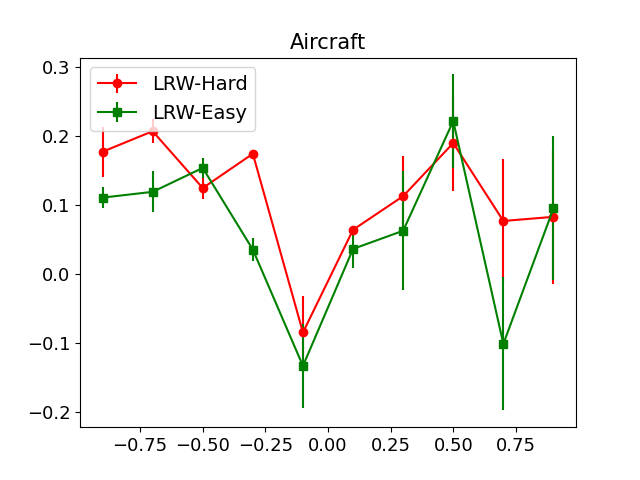}
    \includegraphics[width=0.32\textwidth]{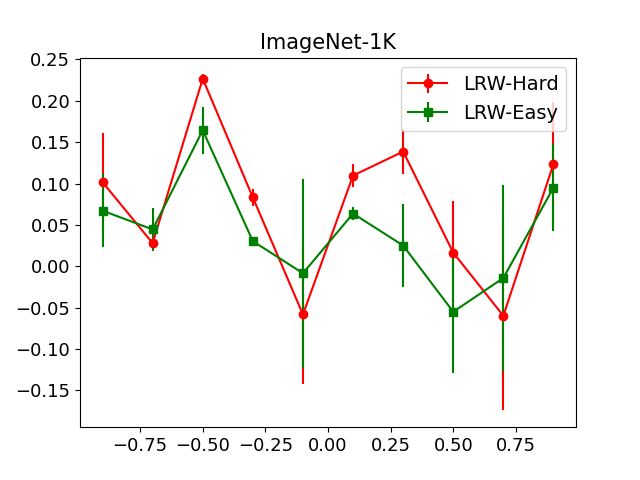}
    \includegraphics[width=0.32\textwidth]{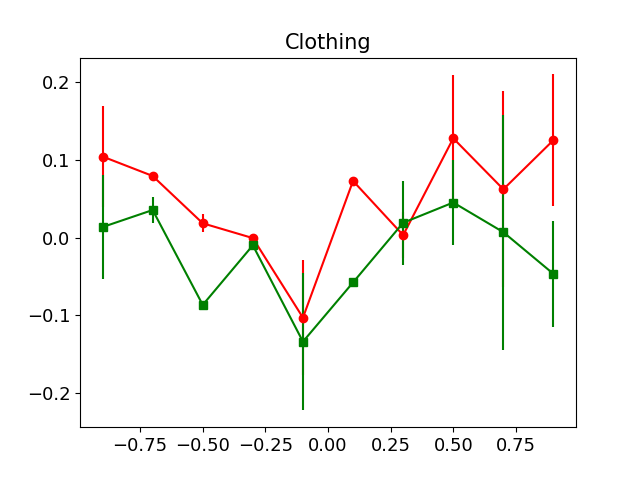}
    \caption{\textbf{Top.} Histograms of difference in margin of the LRW-Hard trained classifier and ERM classifier. \textbf{Bottom.} Mean and Standard deviation of margin deltas between the LRW Hard/Easy methods and the ERM classifier on the test examples, binned by ERM classifier margins with a bin width of 0.2 units. On the x-axis is the starting of the margin interval bin of the ERM classifier. We see that LRW-Hard classifiers have a tendency to \textit{increase} margin over and above the ERM margin, whereas LRW-easy classifiers appear to reduce margin.}
    \label{fig:mean_std_rem}
\end{figure*}

\subsection{Baselines}

\textbf{ERM.} This is the standard empirical risk minimization classifier, obtained by training a classifier with cross-entropy loss in a conventional batch-learning manner, and validation data used for hyperparameter tuning. Its margins are then used for selecting the validation set for our method.\\
\textbf{Margin-based re-weighting.} \citet{liu2021probabilistic} suggest the use of probabilistic margin as an ad-hoc scaling term for adversarial samples, in an adversarial training regime. We extended this method to simply reweight all data points according to ERM margin, and training a second classifier from scratch. This is a control baseline to contrast the contributions of the LRW framework against the notion of using margin directly in the training loss. \\
\textbf{Meta-Weight-Net.} \citet{shu2019meta} proposed a learned re-weighting scheme based on bi-level optimization introduced in \cite{ren2018learning} but using a 2-layer meta network to predict instance level weights using taking loss as input.  \\
\textbf{Fast-Sample Re-weighting.} \citet{zhang2021learning} proposed a meta-learning scheme which generates a pseudo validation set and then proposed an efficient and faster sample re-weighting based meta-learning technique to re-weight train examples.\\
\textbf{RHO-Loss}. \citet{mindermann2022prioritized} proposes selecting examples based on a Reducible Holdout Set Loss to maximize generalization--this results in an implicit reweighting of training instances. We use our validation set as its holdout set for all the datasets. \\
\textbf{MAPLE}. Proposed in \cite{zhou2022model} Similar to \cite{ren2018learning, zhang2021learning} based on free-parameters based reweighting setup but advocates for certain "OOD risk" objectives for improving generalization/robustness of the classifier.\\
\textbf{BiLAW}. Another bi-level optimization based learned re-weighting similar to Meta-weight-Net, proposed by \citet{holtz2022learning}, which feeds multi-class margin to the meta-network, resulting in a more robust (both in-dist and adversarially) classifier. \\
\textbf{StableNet}. Proposed by \citet{zhang2021deep} and optimizes sample weights such that the dependence among various features is decreased. \\
\textbf{GDW}. A recently proposed \cite{chen2021generalized} re-weighting method designed for handling skewed or noisy label scenarios.

\section{Time Complexity, Compute, Tuning:}
\textit{Training time:} Averaging over datasets in Figure \ref{fig:acc_gain}, runtime as a function of ERM cost is (LRWOpt, LRW-hard, MWN, L2R) := (1.6x, 2.4x, 1.4x, 1.4x). LRWOpt is marginally more expensive than MWN for noticeably higher accuracy, and substantially lower than the train-twice heuristic (LRW-hard) while meeting or exceeding its accuracy.\\
\textit{FLOPS:} (LRWOpt, LRW-hard) are $\sim$ (1.7x, 2.3x) ERM. \\
\textit{Hyperparams:} Compared to LRW, we have one additional tunable hyperparameter for splitter's learning rate. LRW itself requires a meta-network learning rate and $Q$. Sensitivity analysis suggests any moderate value of $Q$ is sufficient; we set $Q=5$ across datasets. The parameter $\delta$ is fixed at 0.1; we believe this is a reasonable general tradeoff between training \& validation sizes. We also found that ERM hyperparameters are sufficient for LRW; the meta-network and splitter learning rates can be tied to classifier learning rates without much degradation.

\section{Comparison with \textit{only} hard examples in validation set} 
We further analyze three new variants which involve using only hard examples in the validation set: First we incorporate the loss highlighted in \cite{li2020difficulty} for the validation set along with our LRW-Hard method. Second, in our LRWOpt method we decrease the threshold $\Theta$ for train set to 0.2, such that only the hardest of the examples are there in the validation set and third where in the LRW-hard, we limit the validation set to only negative margin (i.e., incorrectly classified) examples from ERM. Table \ref{tab:gain_new} shows accuracy \% gains of LRWOpt over these variants on 4 randomly picked datasets including 1 OOD challenge.

\begin{table}[h]
    \centering
    \footnotesize
    \begin{tabular}{c|c|c|c|c}
    \toprule
        & IN-100 & DR & CIFAR-100 & iWildCam \\
        \midrule
       Variant 1  &  0.8 & 0.6 & 1.1 & 0.8\\
       Variant 2  & 0.7 & 0.9 & 1.3 & 0.6\\
       Variant 3  & 0.7 & 0.7 & 1.5 & 0.9\\
       \bottomrule
    \end{tabular}
    \caption{Accuracy gain \% of LRWOpt over variants.}
    \label{tab:gain_new}
\end{table}
\vspace{-0.1in}
\noindent

\section{Accuracy Comparison}
We present the raw accuracy values of all methods reported in the paper. Table \ref{accuracy-table} shows the results for this comparison corresponding to Figure 1 and Table \ref{accuracy-table-OOD} corresponding to Figure 2 in the main paper.  This underscores shows the effectiveness of our method as it shows gains even at high accuracy values like for the Oxford-IIIT pets dataset. Furthermore, it is also effective on relatively difficult datasets where model suffer in performance like Imagenet-1K dataset.

\section{Analysis of Predicted Margins}
We show the histograms of difference in margins predicted by LRW-Hard and ERM classifier on the remaining datasets including Aircraft, Stanford Cars and ImageNet-100. Figure \ref{fig:mean_std_rem} shows the results. Here also, a pattern similar to the main draft is observed, \textit{i.e.}, more examples are on the positive side, thus showing that LRW-Hard is able to optimize margins. This is supported by both mean and median being on the positive side. \\
We also show experiment involving grouping the points based on ERM margin values and reporting the mean and std of the margin difference between LRW Hard/Easy and ERM classifier, for the remaining datasets including Aircraft, ImageNet-100 and ImageNet-1K, in Figure \ref{fig:mean_std_rem}. The results are similar as in the main paper for the CIFAR-100 and Clothing datasets. Here also, for LRW-Hard, the mean is positive and significant, especially for positive margin examples showing the effectiveness of LRW Hard compared with ERM. Furthermore, the difference between LRW Hard and LRW Easy plots is significant, further backing the claim regarding importance of validation set and its effectiveness in margin maximization.


\end{document}